\setlist{leftmargin=7mm}
\definecolor{mydarkgreen}{rgb}{0,0.45,0.08}
\definecolor{mydarkblue}{rgb}{0,0.1,0.6}
\theoremstyle{plain}
\newtheorem{theorem}{Theorem}[section]
\newtheorem{lemma}[theorem]{Lemma}
\newtheorem{corollary}[theorem]{Corollary}
\theoremstyle{definition}
\newtheorem{definition}[theorem]{Definition}
\theoremstyle{remark}
\newcommand{\norm}[1]{\ensuremath{\left\| #1 \right\|}}
\def\llama{$\mathtt{Llama}$-$\mathtt{3.1}$-$\mathtt{8B}$-$\mathtt{Instruct}$}
\def\intern{$\mathtt{InternVL2.5}$-$\mathtt{8B}$}
\def\mistral{$\mathtt{Ministral}$-$\mathtt{8B}$-$\mathtt{Instruct}$-$\mathtt{2410}$}
\def\qwenfourteen{$\mathtt{Qwen}$-$\mathtt{2.5}$-$\mathtt{14B}$-$\mathtt{Instruct}$}
\def\qwenthirtytwo{$\mathtt{Qwen}$-$\mathtt{2.5}$-$\mathtt{32B}$-$\mathtt{Instruct}$}
\title{Streaming Attention Approximation \\ via Discrepancy Theory}
\author{
  Ekaterina Kochetkova \\
  EPFL \\
  \texttt{ekaterina.kochetkova@epfl.ch} \\
   \and
  Kshiteej Sheth \\
  EPFL \\
  \texttt{kshiteej.sheth@epfl.ch} \\
   \and
  Insu Han \\
  KAIST \\
  \texttt{insu.han@kaist.ac.kr} \\
  \and
  Amir Zandieh \\
  Google Research \\
  \texttt{zandieh@google.com} \\
  \and
  Michael Kapralov\\
  EPFL\\
  \texttt{michael.kapralov@epfl.ch}
}
\date{} 
\begin{document}

\maketitle

\begin{abstract} 
  Large language models (LLMs) have achieved impressive success, but their high memory requirements present challenges for long-context token generation. In this paper we study the streaming complexity of attention approximation, a key computational primitive underlying token generation. 
  
  Our main contribution is BalanceKV, a streaming algorithm for $\epsilon$-approximating attention computations based on geometric process for selecting a balanced collection of Key and Value tokens as per Banaszczyk's vector balancing theory. We complement our algorithm with space lower bounds for streaming attention computation. Besides strong theoretical guarantees, BalanceKV exhibits empirically validated performance improvements over existing methods, both for attention approximation and end-to-end performance on various long context benchmarks.
\end{abstract}

\section{Introduction}
\label{submission}

Transformer-based models are the foundation of ongoing artificial intelligence revolution.
Their applications span a wide range of domains, from leading-edge language models (LLM)~\cite{achiam2023gpt,Claude} to text-to-image~\cite{ramesh2022hierarchical, firefly, midjourney}, text-to-video synthesis~\cite{sora}, coding assistance~\cite{copilot} and even in multimodal domains across text, audio, image, and video~\cite{gpt4o}. 
At the core of these models is the Transformer architecture, powered by the self-attention mechanism~\cite{vaswani2017attention}, which enables effective capture of pairwise correlations across tokens in an input sequence.
As these models scale in size and context length~\cite{kaplan2020scaling}, they face significant computational challenges, particularly in terms of memory usage.
Efficiency and accuracy are essential to unlock the full potential of LLMs in generating long sequences.

\paragraph{Space bottlenecks in transformer models.} Most large language models, along with multimodal and video models, adopt an autoregressive, decoder-only architecture. 
This architecture generates tokens sequentially, applying attention dynamically to each newly generated token. 
To avoid redundant attention score computations during the generation phase, these models explicitly store the key and value embeddings of previously generated tokens in a cache in each attention layer. Thus, a major challenge is the fact that the memory complexity of storing previously generated key value embeddings scales with both the model size (i.e., the number of layers and attention heads) and, critically, the context size. 
Additionally, each model session typically requires its own dedicated cache for storing key value embeddings, further exacerbating memory usage.
This growing demand has become a significant bottleneck, affecting both memory consumption and computational speed, particularly for models handling long context lengths. 

\paragraph{Streaming attention computation.} The main reason for the need of storing the past key and value embeddings is for the attention computation happening inside each self attention layer during token generation after processing a context -- to generate the next token, each self attention layer computes the attention between the query embedding of the current token and the key and value embeddings of all the tokens that were previously generated or part of the context. In this paper we study the \emph{streaming attention approximation} problem -- the problem of approximately computing attention using a small amount of space, i.e. without storing all previously seen key and value embeddings. Our main contribution is \hyperref[alg:main]{\textsc{BalanceKV}}, a novel provably correct algorithm for streaming attention approximation  based on discrepancy theory. The core of our approach is a vector balancing algorithm from discrepancy theory that exploits the geometry of key and value tokens to deduce a small subset of them that well approximates the operations happening inside a self-attention layer.  We complement our algorithm with a lower bound on the streaming complexity of approximating attention.

An algorithm for streaming attention approximation can directly be used for compressing the key value cache which stores the past key value embeddings in each layer in an LLM, thus improving the efficiency of LLM token generation.  We empirically evaluate \hyperref[alg:main]{\textsc{BalanceKV}} both on the problem of approximating attention and on end-to-end generation tasks, showing performance gains.

\subsection{Related Work}\label{sec:related_works}
For discrepancy theory, Banaszczyk's seminal works~\cite{B98, B12} establishing theoretical guarantees for vector set discrepancy have sparked research in the vector balancing problem~\cite{DNTT18}. This led to algorithmic developments in both offline \cite{B10} and online \cite{BJSS19, ALS21, KRR23} settings. The vector balancing problem has particular relevance to streaming and sublinear algorithms, as minimizing a dataset's discrepancy yields small subsets that effectively preserve the original dataset's properties. Recently ~\cite{PT20, CKW24} extend these discrepancy theory ideas for \emph{kernel density estimation} using sublinear memory.
 
A simple yet effective approach is quantizing previously generated key value embeddings with fewer bits~\cite{yue2024wkvquant, yang2024no, dong2024qaq, kang2024gear, liu2024kivi, hooper2024kvquant, zhang2024kv, zandieh2024qjl}.
Another line of work focuses on token-level pruning, where redundant or less important tokens get evicted from the set of all previously generated key value embeddings~\cite{beltagy2020longformer, zhang2024h2o, liu2024scissorhands, xiao2023efficient, zandieh2024subgen, li2024snapkv}.
Many of the works in this line have used accumulated attention scores to select important previously generated tokens~\cite{zhang2024h2o, li2024snapkv, xiao2023efficient}. 
Recent works extend those methods to an adaptive way of budget allocation across layer~\cite{cai2024pyramidkv} and head~\cite{fu2024not}.

\subsection{Overview of Our Contributions}
In this work we take the token subset selection approach to reduce the memory complexity of LLM token generation: store and maintain only a subset of previously generated key and value embeddings corresponding to a few ``important'' tokens in the sequence. Of course, the central question is how to define ``importance'' of tokens. Our approach here is to apply discrepancy theory, which, at a high level, considers a token important if it is crucial to preserving the projection of the total collection of tokens onto some direction in the token space. This leads to the idea of selecting a subset of tokens that is ``balanced'' simultaneously in every direction. Inspired by the recent breakthrough result of~\cite{ALS21} on online discrepancy minimization, we design a method for balancing key-value pairs online using small space, namely our \hyperref[alg:main]{\textsc{BalanceKV}} algorithm. Interestingly, this algorithm is {\em online}, i.e. the importance of a token is determined only by preceding tokens -- in sharp contrast with state of the art heuristics for token selection such as  PyramidKV \cite{cai2024pyramidkv} and SnapKV \cite{li2024snapkv}, whose performance, as we show, our algorithm matches or improves upon. Our contributions are: 
\begin{enumerate}
    \item In \cref{sec:theory_main} we propose \hyperref[alg:main]{\textsc{BalanceKV}}, an algorithm for recursively compressing the set of previously generated tokens using a geometric correlated sampling process based on discrepancy theory. We show that \hyperref[alg:main]{\textsc{BalanceKV}} gives provable guarantees for streaming attention approximation under the bounded $\ell_2$ norm assumption (\cref{thm:main-theorem}). Using tools from communication complexity, we also show a lower bound on the memory complexity of any algorithm for streaming attention approximation in \cref{sec:theory_main}.
    \cref{sec:tech_overview} contains the formal problem formulation of streaming attention approximation, its applicability to key value cache compression, as well as a technical overview of the main results and techniques of \cref{sec:theory_main}.
    \item In \cref{sec:experiments_main} we empirically evaluate our algorithm in various settings. In \cref{sec:single-layer} we show our approach leads to a lower relative error for single layer attention approximation for open-source LLMs including \llama~\cite{dubey2024llama} and \mistral~\cite{mistral} as compared to uniformly sampling keys and values in the cache. \cref{sec:single-layer} we also perform ablation studies to show how various parameters in our algorithm affect the relative error for single layer attention approximation. In Sections \ref{sec:end-to-end} and \ref{sec:niah} we perform end to end experiments on various benchmarks such as LongBench \cite{bai2023longbench} using models of various sizes such as \llama{},\qwenfourteen{} and \qwenthirtytwo{} \cite{qwen2,qwen2.5}, and Needle in a Haystack \cite{kamradt2023needle}. We show that our provable method for attention approximation when applied to key value cache compression performs better compared to previous existing token subset selection heuristics on end to end tasks. Finally in \cref{sec:efficiency_metrics} we present system efficiency metrics regarding our implementation.
\end{enumerate}
\section{Technical Overview}\label{sec:tech_overview}
In this section, we first set up the formal problem formulation that we tackle, followed by an overview of our techniques and our main results.
\subsection{Streaming Attention Approximation: Formulation and Motivation}
Autoregressive Transformers generate tokens one by one and each depends on the previously generated tokens. When Transformers process a sequence of tokens, the \textit{attention mechanism} operates by computing three types of embeddings for each token at every layer: query, key and value. The query and key capture how different tokens interact, while the value is the actual content to be aggregated. Such interactions are quantified by so-called \textit{attention scores}, obtained by applying the softmax to the inner product between the query of a given token and the keys of all others. These scores determine how much each previous token's value contributes to the final output.  Once the keys and values are computed for a given token, they do not need to be recomputed when generating subsequent tokens. 

Formally, suppose that we have a stream of query, key and value embeddings $(q_1,k_1,v_1),\ldots,(q_n,k_n,v_n)$, that is the $j$-th token is represented as a triplet of ($q_j, k_j, v_j$) where $q_j, k_j, v_j \in \mathbb{R}^d$ for all $j\in [n]$. Let $K_j, V_j \in \mathbb{R}^{j \times d}$ be matrices defined by stacking those keys and values in their respective rows.

To compute the following at every step $j$ to generate $j+1$ token, is called the \emph{streaming attention problem}:
\begin{equation}\label{eq:attn-def}
    \text{Attn}(q_j, K_j, V_j) := \text{softmax}\left(\frac{K_j\cdot q_j}{\sqrt{d}}\right)^T\cdot V_j. 
\end{equation}
Keeping all of the key-value pairs in the cache is prohibitively expensive, especially for long sequences. Instead, we opt for approximate computation by sampling a few key-value pairs. Specifically, our goal is to construct an algorithm that at every time step $j$ computes an estimator $z_j$ for $\text{Attn}(q_j, K_j, V_j)$ in sublinear in $n$ time and memory. In particular for given precision $\varepsilon>0$, $z_j$ should satisfy the following error constraint:
\begin{equation}\label{eq:objective}
    \|z_j - \text{Attn}(q_j,K_j, V_j)\|_2\leq \varepsilon \left\|\text{softmax}\left(\frac{K_j\cdot q_j}{\sqrt{d}}\right)\right\|_2\|V_j\|_F.
\end{equation}
A sublinear in $n$ time and memory algorithm to compute $z_j$ will require knowledge of significantly less key-value pairs than $K_j,V_j$, thus reducing the size of the key value cache needed to store them. This motivates the study of streaming attention approximation, as an algorithm for this can directly be used for key value cache compression during LLM token generation. In the next section we discuss how we will construct such an estimator $z_j$ at a high level.

\subsection{\textsc{SoftmaxBalance}: Attention Approximation via Discrepancy Theory}

We now start with presenting the main ideas of our approach. By the definition of softmax, \cref{eq:attn-def} can be written as
\begin{align*}
\text{Attn}(q_j, K_j, V_j) =\frac1{Z_j} \exp\left(\frac{K_j \cdot q_j}{\sqrt{d}}\right)^T\cdot V_j,
\end{align*}
where for a matrix $A$ we write $\exp(A)$ to denote entry-wise exponential function to $A$ and \[
Z_j:=\sum_{i \in [j]}\exp(\langle k_i, q_j\rangle/ \sqrt{d}).\]
Our approach to approximate Attn$(q_j,K_j,V_j)$ consists of two subroutines which approximate: 
\begin{enumerate}
    \item Softmax normalization $Z_j=\sum_{i \in [j]} \exp(\langle k_i, q_j\rangle/{\sqrt{d}})$,
    \item Matrix-vector product between $V_j$ and $\exp(K_j \cdot q_j/\sqrt{d})$.
\end{enumerate}
To understand our main idea, suppose we are at the end of the stream (i.e., $j=n$) and we store all key-value pairs $(k_1, v_1), \ldots, (k_n, v_n)$. Then for an arbitrary query $q_n$ we aim to approximate the matrix-vector product
$\exp(K_n \cdot q_n/\sqrt{d})^T\cdot V_n = \sum_{i \in [n]} \exp(\langle k_i, q_n\rangle/\sqrt{d})v_i$ by choosing a subset of the rows of $K_n$ and $V_n$ of size at most $n/2$ which corresponds to a compression rate of $0.5$. Suppose we can design an algorithm which splits the set $C$ of all keys and values into two groups $C'$ and $C\backslash C'$ so that the matrix-vector product function for any query vector $q_n$ is roughly equal over $C'$ and $C\backslash C'$ that is informally,
\begin{align*}
\sum_{\{k, v\}\in C'}\exp\left(\frac{\langle k, q_n\rangle}{\sqrt{d}}\right)v \approx \sum_{\{k, v\}\in C\backslash C'}\exp\left(\frac{\langle k, q_n\rangle}{\sqrt{d}}\right)v.
\end{align*} 
Then, we are able to approximate the matrix-vector product function with either one of the sums above since informally:
\begin{align*}
\sum_{\{k, v\}\in C}\exp\left(\frac{\langle k, q_n\rangle}{\sqrt{d}}\right)v \approx 2\sum_{\{k, v\}\in C'}\exp\left(\frac{\langle k, q_n\rangle}{\sqrt{d}}\right)v.
\end{align*}
Therefore, it would suffice to keep the smaller subset of $C'$ and $C\backslash C'$ as the desired subset of key value embeddings and discard the rest. If we wanted to compress the key value cache to a smaller size by a factor $2^{T}$ for some $T$, we would recursively compress the selected subset using the same procedure $T-1$ more times.

A similar goal is captured by the {\it vector balancing problem} studied extensively in discrepancy theory; given a set of vectors $C=\{k_1, \dots, k_n \}\subset \mathbb{R}^d$ with $\|k_j\|_2\leq 1$ for all $j$, partition them into two groups $C',C\setminus C'$ such that for any $q\in \mathbb{R}^d$ it holds $\sum_{k\in C'}\langle k,q\rangle \approx \sum_{k \in C\setminus C'}\langle k,q \rangle$ with high probability. The Self-Balancing Walk algorithm~\cite{ALS21} is a breakthrough result for the above vector balancing problem. However we need to develop an algorithm for the vector balancing problem with respect to function $ \exp(\langle k, \cdot\rangle/\sqrt{d})v$ instead of the inner product function $\langle k, \cdot \rangle$. 

Our first contribution is to develop an algorithm for our task, building upon the result from the self-balancing walk~\cite{ALS21}, which essentially randomly partitions the set of keys and values $C$ into $C'$ and $C\setminus C'$ such that the following holds with high probability under the assumptions that the norms of the query and key embeddings are bounded, 
\begin{align*}
        &\left\|\sum_{\{k, v\} \in C'}\exp\left(\frac{\langle k, q_n\rangle}{\sqrt{d}}\right)v  - \sum_{\{k, v\} \notin C'}\exp\left(\frac{\langle k, q_n\rangle}{\sqrt{d}}\right)v\right\|_2 \leq
        O\left(\log(nd)\right)
        \cdot\max_{j \in [n]}\|v_i\|_2.
\end{align*}
We refer to this algorithm as \hyperref[alg:BALANCE-V]{\hyperref[alg:BALANCE-V]{\textsc{SoftmaxBalance}}}, its formal guarantee is presented in \cref{thm:BALANCE-vectors} and its pseudocode is presented in \cref{alg:BALANCE-V}. \cref{thm:BALANCE-vectors} shows that  \hyperref[alg:BALANCE-V]{\textsc{SoftmaxBalance}} succeeds to divide $C$ into subsets $C'$ and $C\backslash C'$ which are balanced with respect to function $ \exp(\langle k, \cdot\rangle/\sqrt{d})v$ up to an error which only has logarithmic dependence on the size of $C$.
In addition, \hyperref[alg:BALANCE-V]{\textsc{SoftmaxBalance}} can accept as input value vectors of arbitrary dimension $s$. Therefore, if instead of the value vectors $v_1, \ldots, v_n \in \mathbb{R}^d$ we input the set of scalars $v_1=\dots=v_n=1$, we will get an algorithm for the vector balancing problem with respect to function $\exp(\langle k, \cdot\rangle/\sqrt{d})$. This implies that we can use \hyperref[alg:BALANCE-V]{\textsc{SoftmaxBalance}} to compress the key value cache to even approximate the softmax normalization $\sum_{i \in [n]} \exp(\langle k_i, q_n\rangle/{\sqrt{d}})$. We now discuss how to use \hyperref[alg:BALANCE-V]{\textsc{SoftmaxBalance}} for streaming attention approximation, i.e. to use it to compute an estimator $z_j$ satisfying \cref{eq:objective}.

\subsection{\textsc{BalanceKV}: Implementing \textsc{SoftmaxBalance} in Streaming}
For a sequence of $n$ tokens and a given memory budget of $t \ll n$, we aim to design a procedure which applies \hyperref[alg:BALANCE-V]{\textsc{SoftmaxBalance}} to select from $n$ key-value embeddings a set of at most $t$ in the streaming setting and can compute an estimator $z_j$ satisfying \cref{eq:objective} for all steps $j$ in the stream. In the streaming setting one needs to consider the following aspects. As described in the previous section, one iteration of \hyperref[alg:BALANCE-V]{\textsc{SoftmaxBalance}} only allows one to select a $n/2$ sized subset of $n$ key-value embeddings, which is higher than the desired budget of $t$ embeddings. This can be easily mitigated by recursively applying \hyperref[alg:BALANCE-V]{\textsc{SoftmaxBalance}} $2^{\log(n/t)}$ times, each time halving the set of key-value embeddings. However, this cannot be implemented in the streaming as we have a limited memory budget of $t$ which prohibits us from storing all key-value embeddings during recursion. 

To deal with this, we use the classical merge and reduce technique used in the design of streaming algorithms~\cite{BHMSSZ21, MG82, GLPW16}. 
\hyperref[alg:merge_reduce]{\textsc{MergeAndReduce}} algorithm is a recursive binary tree-based approach that allows one to implement \hyperref[alg:BALANCE-V]{\textsc{SoftmaxBalance}} recursively in a streaming setting with the total memory not exceeding $\widetilde{O}(dt)$, where $\widetilde{O}(\cdot)$ supresses polynomial in $\log n$ factors, under the assumption that the norms of queries and keys are bounded. The guarantees of \hyperref[alg:merge_reduce]{\textsc{MergeAndReduce}} are presented in \cref{thm:MergeAndReduce}, its pseudocode in Algorithm \ref{alg:merge_reduce} and a visual representation in Figure \ref{fig:merge_reduce}. 
If the norms of all value embeddings in the stream are the same up to constant factors, that is for all $i,j\in [n]$ $0.5\leq \|v_i\|_2/\|v_j\|_2\leq 2$, then the outputs of \hyperref[alg:merge_reduce]{\hyperref[alg:merge_reduce]{\textsc{MergeAndReduce}}} can be used to construct an estimator $z_j$ satisfying our attention approximation guarantee of equation \cref{eq:objective} with precision $\varepsilon$ for $t=\widetilde{O}(\sqrt{d}/\varepsilon)$. However, the value embeddings may have very different norms. 

Our main algorithm \hyperref[alg:main]{\hyperref[alg:main]{\textsc{BalanceKV}}} (pseudocode in \cref{alg:main}) deals with this issue by grouping the key-value embeddings in the stream according to the norms of the value embeddings, running a separate instance of \hyperref[alg:merge_reduce]{\textsc{MergeAndReduce}} on each group, and combining the outputs of each instance of \hyperref[alg:merge_reduce]{\textsc{MergeAndReduce}}. \hyperref[alg:main]{\textsc{BalanceKV}} constructs a final estimator $z_j$ satisfying \cref{eq:objective} with precision $\varepsilon$ only using $\widetilde{O}(d\sqrt{d}/\varepsilon)$ memory and $\widetilde{O}(d^2/\varepsilon^2)$ runtime per every step $j$ of the stream, assuming the norms of query and key embeddings are bounded. Existing methods \cite{zandieh2024subgen} subsample keys and values independently in the cache, and thus have a $1/\varepsilon^2$ dependence on $\varepsilon$ in total memory. The guarantees of \hyperref[alg:main]{\textsc{BalanceKV}} are presented in \cref{thm:main-theorem}.

Finally using the lower bound on the communication complexity of INDEX, we show a lower bound on the memory complexity of any algorithm for streaming attention approximation in \cref{thm:lower_bound}.

\section{Main Theoretical Results}\label{sec:theory_main}

Our main algorithm for streaming attention approximation is \hyperref[alg:main]{\textsc{BalanceKV}}. It takes in as input a stream of $n$ tokens $(q_1, k_1, v_1), (q_2, k_2, v_2), \ldots, (q_n, k_n,v_n)$  and at every step of the stream outputs an estimate $z_j$ to Attn$(q_j,K_j,V_j)$ (see \cref{eq:attn-def} for the definition of Attn$(.)$) satisfying \cref{eq:objective} with precision $\varepsilon$. Assuming that the $\ell_2$ norms of $q_j,k_j$ are at most $r$ for all $j$, \hyperref[alg:main]{\textsc{BalanceKV}} uses total space $\widetilde{O}(d\sqrt{d}e^{2r^2/\sqrt{d}}\cdot 1/\varepsilon)$ and uses  $\widetilde{O}(d^2e^{4r^2/\sqrt{d}}\cdot 1/\varepsilon^2)$ runtime at each step $j$ of the stream to output $z_j$. Our main theorem is as follows.

\begin{theorem}\label{thm:main-theorem} For any $r, \varepsilon > 0$, any positive integers $n,d$, any set of tokens $(q_1, k_1, v_1), (q_2, k_2, v_2), \ldots,$ $(q_n, k_n,v_n)$ where $q_j, k_j, v_j \in \mathbb{R}^d$ satisfy $\|q_j\|_2, \|k_j\|_2 \leq r$ for all $j$, consider an invocation of \hyperref[alg:main]{\textsc{BalanceKV}}  with 
    \[ \text{batch size } t = \widetilde{O}\left(  \sqrt{d}e^{2r^2/\sqrt{d}}/\varepsilon\right) 
    \text{ and compression rate } 2^{-T} \text{ with } T = \log(n/t).\]
    Then \hyperref[alg:main]{\textsc{BalanceKV}} outputs a vector $z_j$ satisfying \cref{eq:objective} with probability at least $1 - 1/\text{poly}(n)$ at every step $j$ of the stream. It uses total memory $\widetilde{O}\left( d\sqrt{d} e^{2r^2/\sqrt{d}}/\varepsilon\right)$ across all steps of the stream and runtime of $\widetilde{O}\left(  d^2e^{4r^2/\sqrt{d}}/\varepsilon^2\right)$ per step of the stream.
\end{theorem}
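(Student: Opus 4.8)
The plan is to derive Theorem~\ref{thm:main-theorem} from \cref{thm:MergeAndReduce}, applied to $O^*(1)$ independent instances of \textsc{MergeAndReduce} — one for the softmax normalization and one for each value‑norm bucket — and then to propagate the resulting additive errors through the ratio representation $\mathrm{Attn}(q_j,K_j,V_j)=N_j/Z_j$, where $N_j:=\sum_{i\le j}\exp(\langle k_i,q_j\rangle/\sqrt d)\,v_i$ and $Z_j:=\sum_{i\le j}\exp(\langle k_i,q_j\rangle/\sqrt d)>0$. First I would spell out what \textsc{BalanceKV} stores: a single \textsc{MergeAndReduce} instance fed the scalar ``value'' $1$ in place of every $v_i$ (which, as observed after \cref{thm:BALANCE-vectors}, balances $\exp(\langle k,\cdot\rangle/\sqrt d)$ and hence yields an estimate $\tilde Z_j$ of $Z_j$), and, for each bucket $G_b=\{\,i:\|v_i\|_2\in[2^{b},2^{b+1})\,\}$, a separate instance fed the genuine $v_i$'s of that sub‑stream, yielding an estimate $\tilde N_j^{(b)}$ of $N_j^{(b)}:=\sum_{i\in G_b\cap[j]}\exp(\langle k_i,q_j\rangle/\sqrt d)v_i$; the output is $z_j=\bigl(\sum_b\tilde N_j^{(b)}\bigr)/\tilde Z_j$. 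Tokens whose value norm is below an $\varepsilon/\mathrm{poly}(n,e^{r^2/\sqrt d})$ fraction of the running maximum are discarded; one checks that this perturbs the final estimate by at most $\tfrac{\varepsilon}{8}S_j$, where $S_j:=\|\mathrm{softmax}(K_jq_j/\sqrt d)\|_2\|V_j\|_F$ is the right‑hand‑side scale of \cref{eq:objective}, and that the truncation caps the number of active buckets at $B=O^*(1)$, which is what keeps total memory at $O^*(t)$.

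I would then instantiate \cref{thm:MergeAndReduce} on each of the $1+B=O^*(1)$ instances with batch size $t$ and depth $T=\log(n/t)$. Since $\|q_j\|_2,\|k_j\|_2\le r$ we have $\exp(\langle k_i,q_j\rangle/\sqrt d)\in[e^{-r^2/\sqrt d},e^{r^2/\sqrt d}]$, so the theorem gives — simultaneously for all steps $j$ and all queries $q$ in the radius‑$r$ ball, with probability $1-1/\mathrm{poly}(n)$ per instance — bounds of the form $|\tilde Z_j-Z_j|\le O^*(1)\,(j/t)\,e^{r^2/\sqrt d}$ and $\|\tilde N_j^{(b)}-N_j^{(b)}\|_2\le O^*(1)\,(n_b^{(j)}/t)\,e^{r^2/\sqrt d}\,2^{b+1}$, where $n_b^{(j)}=|G_b\cap[j]|$. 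A union bound over the $O^*(1)$ instances preserves overall failure probability $1/\mathrm{poly}(n)$. Using $Z_j\ge j\,e^{-r^2/\sqrt d}$ and the choice $t=O^*(e^{2r^2/\sqrt d}/\varepsilon)$ (fixing the hidden constant), the first bound becomes $|\tilde Z_j-Z_j|\le\tfrac{\varepsilon}{8}Z_j$, hence $\tilde Z_j\ge\tfrac12 Z_j$.

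The crux — and the step I expect to cause the most trouble — is turning the \emph{absolute} per‑bucket error $\|\tilde N_j^{(b)}-N_j^{(b)}\|_2\lesssim(n_b^{(j)}/t)\,e^{r^2/\sqrt d}\,2^{b+1}$ into a bound \emph{relative} to $Z_jS_j$; this is precisely where the bucketing and the two factors of $e^{r^2/\sqrt d}$ combine into the $e^{2r^2/\sqrt d}$ of the theorem. The two ingredients are the lower bounds obtained by restricting to bucket $b$ alone: since every value norm in $G_b$ is at least $2^{b}$ we get $\|V_j\|_F\ge 2^{b}\sqrt{n_b^{(j)}}$, and since each exponential is at least $e^{-r^2/\sqrt d}$ we get $\|\mathrm{softmax}(K_jq_j/\sqrt d)\|_2\ge\tfrac1{Z_j}\bigl(\sum_{i\in G_b\cap[j]}e^{2\langle k_i,q_j\rangle/\sqrt d}\bigr)^{1/2}\ge\tfrac1{Z_j}\sqrt{n_b^{(j)}}\,e^{-r^2/\sqrt d}$; multiplying, $Z_jS_j\ge 2^{b}\,n_b^{(j)}\,e^{-r^2/\sqrt d}$. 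Substituting, the per‑bucket error is at most $O^*(1)\,(e^{2r^2/\sqrt d}/t)\,Z_jS_j\le\tfrac{\varepsilon}{4B}Z_jS_j$ for our $t$, and summing over the $B$ buckets gives $\|\sum_b\tilde N_j^{(b)}-N_j\|_2\le\tfrac{\varepsilon}{4}Z_jS_j$.

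Finally I would combine everything through the standard identity $z_j-\mathrm{Attn}(q_j,K_j,V_j)=\tfrac{\sum_b\tilde N_j^{(b)}-N_j}{\tilde Z_j}+\tfrac{N_j\,(Z_j-\tilde Z_j)}{\tilde Z_j Z_j}$, using $\|N_j\|_2\le\|\exp(K_jq_j/\sqrt d)\|_2\|V_j\|_F=Z_jS_j$ (Cauchy--Schwarz together with the definition of $S_j$); with $\tilde Z_j\ge\tfrac12 Z_j$, $|\tilde Z_j-Z_j|\le\tfrac{\varepsilon}{8}Z_j$, the per‑bucket bound above, and the $\tfrac{\varepsilon}{8}S_j$ truncation term, the $\varepsilon$‑budget $\tfrac{\varepsilon}{2}+\tfrac{\varepsilon}{4}+\tfrac{\varepsilon}{8}<\varepsilon$ yields $\|z_j-\mathrm{Attn}(q_j,K_j,V_j)\|_2\le\varepsilon S_j$, i.e.\ \cref{eq:objective}. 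The memory bound is immediate from \cref{thm:MergeAndReduce}: $O^*(1)$ instances of $O^*(t)=O^*(e^{2r^2/\sqrt d}/\varepsilon)$ each. The per‑step runtime $O^*(t^2)=O^*(e^{4r^2/\sqrt d}/\varepsilon^2)$ follows by accounting for the cost of maintaining the $O^*(1)$ \textsc{MergeAndReduce} instances — each reduce being a \textsc{SoftmaxBalance} call governed by \cref{thm:BALANCE-vectors} — and of evaluating their coreset estimates at $q_j$; this accounting, along with checking that \cref{thm:MergeAndReduce}'s ``for all $q$ in the ball'' guarantee survives the $O^*(1)$‑way union bound and that the truncation step caps $B$ at $O^*(1)$, is the routine part of the argument.
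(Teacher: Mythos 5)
Your proposal is correct and follows essentially the same route as the paper: one \textsc{MergeAndReduce} instance for the softmax normalization, one per value-norm bucket for the numerator, a union bound over the $O(\log n)$ instances, and propagation of the two additive errors through the ratio. The only technical divergence is in aggregating the per-bucket numerator errors --- you lower-bound $Z_jS_j$ bucket-by-bucket via $Z_jS_j\ge 2^{b}n_b^{(j)}e^{-r^2/\sqrt{d}}$ and split the $\varepsilon$-budget over the $B=O(\log n)$ buckets, whereas the paper aggregates all buckets at once via Cauchy--Schwarz ($\sum_i|P_{i,j}|\cdot 2^{i-1}\le\|V_j\|_F\sqrt{j}$ together with $\|\exp(K_jq_j/\sqrt{d})\|_2\ge\sqrt{j}\,e^{-r^2/\sqrt{d}}$); both cost only factors absorbed by $O^*$, and your explicit handling of the truncation of small-norm buckets and of the final ratio identity is, if anything, more careful than the paper's.
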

A pseudocode of \hyperref[alg:main]{\textsc{BalanceKV}} is described in \cref{alg:main}. At its core \hyperref[alg:main]{\textsc{BalanceKV}} relies on our main discrepancy based algorithm, namely \hyperref[alg:BALANCE-V]{\textsc{SoftmaxBalance}}-- see Section \ref{sec:softmax_balance} for details on \hyperref[alg:BALANCE-V]{\textsc{SoftmaxBalance}}. \hyperref[alg:main]{\textsc{BalanceKV}} uses the output of \hyperref[alg:BALANCE-V]{\textsc{SoftmaxBalance}} to compute estimates of the numerator and denominator of Attn$(q_j,K_j,V_j)$ and returns  the desired attention approximation $z_j$ for each streamed index $j$. 
There are two subtleties, however. First, it is important to bucket tokens in the stream according to the norm of the value vectors -- see lines \hyperref[line:bucket_2]{5} and \hyperref[line:bucket_1]{6}. Second, a direct application of \hyperref[alg:BALANCE-V]{\textsc{SoftmaxBalance}} would require too much memory space. To ensure small space usage, we apply a classical streaming technique, namely the 
\hyperref[alg:merge_reduce]{\textsc{MergeAndReduce}} algorithm on top of 
\hyperref[alg:BALANCE-V]{\textsc{SoftmaxBalance}} to reduce the space consumption. The space reduction achieved by \hyperref[alg:merge_reduce]{\textsc{MergeAndReduce}} is by running a logarithmic number of copies of \hyperref[alg:BALANCE-V]{\textsc{SoftmaxBalance}} in a tree-like fashion. More details are introduced in Section \ref{sec:merge_reduce}. 

To summarize, \hyperref[alg:main]{\textsc{BalanceKV}} groups tokens in the stream according to the norms of the corresponding value embeddings, runs a separate instance of $\hyperref[alg:merge_reduce]{\textsc{MergeAndReduce}}$ on each group, and combines the outputs of each instance to construct the final estimate for Attn$(q_j,K_j,V_j)$ at each step $j\in [n]$. Next we present 
\hyperref[alg:BALANCE-V]{\textsc{SoftmaxBalance}} and \hyperref[alg:merge_reduce]{\textsc{MergeAndReduce}}. The full proof of \cref{thm:main-theorem} is given in appendix Section \ref{sec:appendix_full_proof_main_thm}. Finally we state the theorem which provides a lower bound on the memory complexity of any algorithm for streaming attention approximation below, its full proof is provided in appendix Section \ref{sec:appendix_lower_bound}.

\begin{restatable}{theorem}{lowerbound}\label{thm:lower_bound} Suppose that $r^2 \leq d$. Any streaming algorithm which on input $(\{k_1, v_1\}, \ldots, \{k_n, v_n\}, q)$, $\|q\|_2, \|k_i\|_2 \leq r$,  outputs $z_q$ satisfying \cref{eq:objective} with probability 0.999 has space complexity \[\Omega\left(\min\{\frac{1}{\varepsilon^2}, d\exp(2r^2/\sqrt{d})\}\right).\]
\end{restatable}

\newcommand{\algocomment}[1]{\hfill {\small \color{mydarkblue}\texttt{// #1}}}
\begin{algorithm}[t]
\caption{\textsc{BalanceKV$((q_j, k_j, v_j)_{j = 1}^n, r, t, T, \varepsilon)$}}\label{alg:main}
\begin{algorithmic}[1]
    \STATE{\bfseries input:} stream of $n$ tokens $(q_j, k_j, v_j)$, diameter $r$, batch size $t$, compression rate $2^{-T}$, precision parameter $\varepsilon$.
    \STATE \parbox[t]{\dimexpr\linewidth-\algorithmicindent}{
        \setlength{\baselineskip}{10pt}\textcolor{mydarkblue}{
            \small \texttt{// Bucket the stream and maintain $\log(n)$ instances of }$\hyperref[alg:merge_reduce]{\textsc{MergeAndReduce}}$, $\textsc{MR-Numerator}_i$, \texttt{for each bucket to approximate the numerator of Attn$(q_j, K_j, V_j)$; and one instance}, $\textsc{MR-Denominator}$, \texttt{to approximate its denominator.}}
    }
    \STATE $v_{\text{max}} \leftarrow 0$
    \REPEAT
    \STATE\label{line:bucket_2} Find an index $i$ such that $2^{i} \geq \|v_j\|_2 \geq 2^{i - 1}$
    \STATE\label{line:bucket_1} Send $(k_j, v_j)$ as input to $\textsc{MR-Numerator}_i$ \algocomment{Bucket the stream by $\|v\|_2$}
    \STATE $v_{\text{max}} \leftarrow \max \left\{ \|v_j\|_2, v_{\text{max}} \right\}$
    \STATE Erase all $\textsc{MR-Numerator}_i$ with $2^i \leq \frac{\varepsilon}{2n} e^{-\frac{r^2}{\sqrt{d}}}v_{\text{max}}$\label{line:erase_buckets} 
    \algocomment{Erase small norm buckets}
    \STATE $C^0_i, \ldots, C^T_i \leftarrow$ the output of $\textsc{MR-Numerator}_i$
    \STATE $V^l \leftarrow \cup_i C^l_i$ for $l = 0, \ldots, T$ 
    \algocomment{Combine the outputs of $\textsc{MR-Numerator}_i$}
    \STATE \label{line:merge_reduce_denominator} Send $(k_j, 1)$ as input to $\textsc{MR-Denominator}$
    \STATE $K^0, \ldots K^T \leftarrow \textsc{MR-Denominator}$
    \STATE\label{line:output} \textbf{output:} $z_j=\frac{\sum_{l = 0}^T 2^l\sum_{\{k, v\} \in V^l}\exp\left(\frac{\langle k, q_j\rangle}{\sqrt{d}}\right)v}{\sum_{l = 0}^T 2^l\sum_{\{k, v\} \in K^l} \exp\left(\frac{\langle k, q_j\rangle}{\sqrt{d}}\right)}$
    \STATE $j \leftarrow j+1$
    \UNTIL{token stream ends}
\end{algorithmic}
\end{algorithm}

\subsection{\textsc{SoftmaxBalance}}\label{sec:softmax_balance}

We now present our main discrepancy based compression algorithm, \hyperref[alg:BALANCE-V]{\textsc{SoftmaxBalance}}. Given a sequence of key and value embeddings $C= \{(k_1,v_1),\ldots (k_n,v_n)\}$ (with key and value embeddings having possibly different dimensions), the goal of \hyperref[alg:BALANCE-V]{\textsc{SoftmaxBalance}} is to produce a partition of $C$ into subsets $C', C\setminus C'$ such that for any query $q\in \mathbb{R}^d$ we have that $\sum_{(k,v)\in C'}\exp(\langle k,q\rangle/\sqrt{d})v \approx \sum_{(k,v)\in C\setminus C'}\exp(\langle k,q\rangle/\sqrt{d})v$ with high probability. Without loss of generality assume that $|C'|\leq |C|/2$, we can then output $2\sum_{(k,v)\in C'}\exp(\langle k,q\rangle/\sqrt{d})v$ as an approximation to $\sum_{(k,v)\in C}\exp(\langle k,q\rangle/\sqrt{d})v$, thus achieving a factor 2 compression.  Its description is presented in \cref{alg:BALANCE-V} below. We note that while \hyperref[alg:BALANCE-V]{\textsc{SoftmaxBalance}} takes as input a sequence of key and value embeddings, it can nevertheless be used to compute the softmax normalization: we simply run it on the keys, with the corresponding value vector one-dimensional and all equal to $1$ -- see line \hyperref[line:merge_reduce_denominator]{11} in \hyperref[alg:main]{\textsc{BalanceKV}}, where \hyperref[alg:BALANCE-V]{\textsc{SoftmaxBalance}} is called within the corresponding invocation of \hyperref[alg:merge_reduce]{\textsc{MergeAndReduce}} with value vectors as $1$s. It's guarantees are as follows. 

\begin{algorithm}[t]
\caption{$\textsc{SoftmaxBalance}((k_j, v_j)_{j}, r_{\text{key}}, r_{\text{value}}, \delta)$}\label{alg:BALANCE-V}
\begin{algorithmic}[1]
\STATE {\bfseries input:} stream of $\leq n$ key-value embeddings $(k_j, v_j)$, radii $r_{\text{key}}$, $r_{\text{value}}$: $\max_{j}\|k_j\|_2 \leq r_{\text{key}}$, $\max_{j}\|v_j\|_2 \leq r_{\text{value}}$, probability of failure $\delta$.
\STATE $R \leftarrow \exp(r_{\text{key}}^2/2\sqrt{d})\cdot r_{\text{value}}$
\STATE $c \leftarrow 30 \log (n/ \delta)$
\STATE Initialize zero vector $\eta \gets \{ 0 \}$
\FOR{$j$ from 1 and until the end of the stream}
\STATE  $y \gets \left(\exp(\langle k_i, k_j\rangle/\sqrt{d})\langle v_i, v_j\rangle\right)_{i \in [j]}$ 
\STATE {{\bf if} $\left| y^T\eta \right|>c\cdot R^2$ {\bf then }}FAIL
\STATE $p_j \leftarrow \frac{1}{2}-\frac{y^T\eta}{2 c \cdot R^2}$
\STATE $\eta_j \gets \left\{{\begin{array}{ll}
+1&\text{ with probability} \,p_j\\
-1&\text{o.w.}\end{array}}\right.
$
\STATE Add a new zero coordinate $\eta_{j+1} \leftarrow 0$
\ENDFOR
\IF{$|\{ (k_i, v_i): \eta_i = 1 \}| \leq |\{ (k_i, v_i): \eta_i = -1 \}|$}
\STATE{\bfseries output:} $\{ (k_i, v_i): \eta_i = 1 \}$
\ELSE
\STATE{\bfseries output:} $\{ (k_i, v_i): \eta_i = -1 \}$
\ENDIF
\end{algorithmic}
\end{algorithm}

\begin{theorem}\label{thm:BALANCE-vectors}
 Given sets $K=\left\{k_1, \ldots, k_n\right\} \subset \mathbb{R}^d, V=\left\{v_1, \ldots, v_n\right\} \subset  \mathbb{R}^s$, and failure probability $\delta>0$, define $C$ to be the dataset of pairs $C = \{(k_1, v_1), \ldots, (k_n, v_n)\}$. There exists a randomized algorithm, \hyperref[alg:BALANCE-V]{\textsc{SoftmaxBalance}}, which outputs a subset $C' \subset C$, $|C'| \leq |C|/2$, such that, for any vector $q \in \mathbb{R}^d$, with probability at least $1-\delta$,
\begin{equation*}
\begin{aligned}
    &\left\|\sum_{\{k, v\} \in C'}\exp\left(\frac{\langle k, q\rangle}{\sqrt{d}}\right)v  - \sum_{\{k, v\} \notin C'}\exp\left(\frac{\langle k, q\rangle}{\sqrt{d}}\right)v\right\|_2 \leq \\
    &O\left(\sqrt{s} \cdot\log (ns/ \delta)\cdot \exp\left(\frac{\|q\|^2_2}{2\sqrt{d}}\right)\cdot
    \exp\left(\max _{j\in[n]}\frac{\left\|k_j\right\|^2_2}{2\sqrt{d}}\right)\cdot\max_{j \in [n]}\|v_j\|_2\right).
\end{aligned}
\end{equation*}

The runtime of \hyperref[alg:BALANCE-V]{\textsc{SoftmaxBalance}} is $O((d+s)n^2)$ and memory is $O((d+s)n)$.
\end{theorem}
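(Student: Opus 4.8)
\emph{Proof plan.} The plan is to recognize that \textsc{SoftmaxBalance} is exactly the self-balancing walk of \cite{ALS21}, run via the kernel trick on an implicit family of vectors in an inner-product space, and then to read the theorem off from that walk's per-test-vector discrepancy guarantee. Concretely: the exponential dot-product kernel factorizes through $\phi(x):=\bigoplus_{m\ge 0}\tfrac{1}{\sqrt{m!}}(x/d^{1/4})^{\otimes m}$, for which $\langle\phi(x),\phi(y)\rangle=\sum_{m\ge 0}\tfrac{1}{m!}(\langle x,y\rangle/\sqrt d)^m=\exp(\langle x,y\rangle/\sqrt d)$ and, in particular, $\|\phi(x)\|^2=\exp(\|x\|_2^2/\sqrt d)$. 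Put $u_j:=\phi(k_j)\otimes v_j$. Then $\langle u_i,u_j\rangle=\exp(\langle k_i,k_j\rangle/\sqrt d)\langle v_i,v_j\rangle$ is exactly the $i$-th entry of the vector $y$ built at step $j$, and $\|u_j\|=\exp(\|k_j\|_2^2/(2\sqrt d))\|v_j\|_2\le R$ with $R=\exp(r_{\mathrm{key}}^2/(2\sqrt d))\,r_{\mathrm{value}}$. Since $\eta_j=0$ when step $j$ begins, $y^\top\eta=\sum_{i<j}\eta_i\langle u_i,u_j\rangle=\langle w_j,u_j\rangle$ with $w_j:=\sum_{i<j}\eta_iu_i$, so the sampling probability $p_j=\tfrac12-\tfrac{\langle w_j,u_j\rangle}{2cR^2}$ and the failure test $|\langle w_j,u_j\rangle|>cR^2$ coincide with those of the self-balancing walk applied to the unit vectors $u_j/R$. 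The walk is implementable since $\langle w_j,u_j\rangle$ uses only inner products of the $k$'s and $v$'s, and — since its analysis uses only inner products — it applies verbatim inside $\mathrm{span}\{u_1,\dots,u_n\}$, onto which any test vector may be projected without increasing its norm, even though $\phi$ is a priori infinite-dimensional.

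Next I reduce the goal to a single discrepancy inequality. The output $C'$ is the smaller side of the $\pm 1$ partition induced by $\eta$, so $|C'|\le|C|/2$, and up to an overall sign
\[
\left\|\sum_{\{k,v\}\in C'}e^{\langle k,q\rangle/\sqrt d}v-\sum_{\{k,v\}\notin C'}e^{\langle k,q\rangle/\sqrt d}v\right\|_2=\left\|\sum_{j\in[n]}\eta_j\,e^{\langle k_j,q\rangle/\sqrt d}v_j\right\|_2 .
\]
Because $e^{\langle k_j,q\rangle/\sqrt d}(v_j)_\ell=\langle u_j,\phi(q)\otimes e_\ell\rangle$, the $\ell$-th coordinate of $\sum_j\eta_je^{\langle k_j,q\rangle/\sqrt d}v_j$ equals $\langle W,\phi(q)\otimes e_\ell\rangle$ with $W:=\sum_{j\in[n]}\eta_ju_j$; hence the norm to be bounded is $\big(\sum_{\ell=1}^s\langle W,\phi(q)\otimes e_\ell\rangle^2\big)^{1/2}$, and each test vector $\phi(q)\otimes e_\ell$ has norm $\exp(\|q\|_2^2/(2\sqrt d))$.

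Now I invoke the walk. The analysis of \cite{ALS21}, run with walk parameter $\Theta(\log(ns/\delta))$ — which matches the algorithm's $c=30\log(n/\delta)$ when $s=\mathrm{poly}(n)$ — shows simultaneously that with probability $\ge 1-\delta/2$ the walk never fails and that, for each individually fixed test vector $z$, $|\langle W,z\rangle|=O(R\,\|z\|\,\log(ns/\delta))$ with probability $\ge 1-\delta/(2s)$. Applying the latter to $z=\phi(q)\otimes e_\ell$, $\ell\in[s]$, and union-bounding over these $s$ events and the failure event, with probability $\ge 1-\delta$ every $|\langle W,\phi(q)\otimes e_\ell\rangle|$ is $O\big(R\log(ns/\delta)\,e^{\|q\|_2^2/(2\sqrt d)}\big)$; summing the $s$ squares and taking a square root gives the claimed bound after substituting $R=e^{\max_j\|k_j\|_2^2/(2\sqrt d)}\max_j\|v_j\|_2$, i.e.\ taking $r_{\mathrm{key}}=\max_j\|k_j\|_2$ and $r_{\mathrm{value}}=\max_j\|v_j\|_2$. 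Finally the complexity claims are read off Algorithm~\ref{alg:BALANCE-V}: step $j$ forms $y\in\mathbb{R}^j$ in time $O(j(d+s))$ plus $O(j)$ further work, and the state is the $n$ stored keys/values and the sign vector, for $O((d+s)n^2)$ time and $O((d+s)n)$ memory.

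The only non-routine ingredient is the appeal to \cite{ALS21} in the third step: one must check that its potential-function argument carries over to an a priori infinite-dimensional inner-product space and to an arbitrary test vector fixed in advance (rather than a coordinate), and that with the algorithm's parameter $c=30\log(n/\delta)$ the per-test tail remains strong enough for the union bound over the $s$ value coordinates and the $n$ failure checks to cost only the additional logarithmic factor in $\log(ns/\delta)$ — immediate when $s=\mathrm{poly}(n)$, and otherwise handled by enlarging $c$ to $\Theta(\log(ns/\delta))$. Everything else — the kernel factorization, the identification of the algorithm with the walk, and the coordinate-wise reduction — is bookkeeping once the tensor vectors $u_j=\phi(k_j)\otimes v_j$ are identified.
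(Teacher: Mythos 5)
Your proposal is correct and follows essentially the same route as the paper: the tensorized feature map $\varphi(k)\otimes v$ linearizing the softmax kernel, identification of \textsc{SoftmaxBalance} with the self-balancing walk of \cite{ALS21} implemented via the kernel trick, a union bound over the $s$ test vectors $\varphi(q)\otimes e_\ell$ with failure parameter $\delta/s$, and the resulting $\sqrt{s}$ loss when passing from coordinates to the $\ell_2$ norm (the paper goes through the $\ell_\infty$ norm; you sum squares directly, which is equivalent). Your remark about aligning the algorithm's constant $c=30\log(n/\delta)$ with the $\log(ns/\delta)$ needed for the union bound is a fair and slightly more careful observation than the paper makes explicit, but it does not change the argument.
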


The proof of the above theorem uses the breakthrough result of \cite{ALS21} for the vector balancing problem, one of the main problems in discrepancy theory. Given a set of vectors $k_1,\ldots, k_n$ the result of \cite{ALS21} produces a subset $C$ of these vectors of at most half the size such that for any vector $q$ we have that $\sum_{k\in C}\langle k,q\rangle \approx\sum_{k\in [n]\setminus C}\langle k,q\rangle$ with high probability. Our main contribution is an algorithm for the vector balancing problem with respect to the function $\exp(\langle k,\cdot \rangle/\sqrt{d}) v$ as compared to $\langle k,\cdot \rangle$ in the case of \cite{ALS21}. We defer the proof of \cref{thm:BALANCE-vectors} to \cref{sec:appendix_softmax_balance}.

\subsection{\textsc{MergeAndReduce}}\label{sec:merge_reduce}

As briefly mentioned above in \cref{sec:theory_main}, \hyperref[alg:merge_reduce]{\textsc{MergeAndReduce}} is a streaming version of \hyperref[alg:BALANCE-V]{\textsc{SoftmaxBalance}}. The idea is to partition the stream of tokens into batches of size $t$, apply \hyperref[alg:BALANCE-V]{\textsc{SoftmaxBalance}}  to the batches to reduce the size of each batch by a constant factor, and then repeat recursively -- see Fig. \ref{fig:merge_reduce} in the appendix.

If we set batch size $t$ to be about $1/\varepsilon$ (see \cref{thm:MergeAndReduce} below for the more precise setting), we obtain a streaming algorithm that approximates $\sum_{i = 1}^j\exp(\langle k_i, q_j\rangle/\sqrt{d})v_i$ at any point $j$ in the stream using total space $\widetilde{O}(d  \sqrt{d}e^{2r^2/\sqrt{d}}/\varepsilon)$ and runtime  $\widetilde{O}(  d^2e^{4r^2/\sqrt{d}}/\varepsilon^2)$ per step, where $r$ is an upper bound on the norms of key and query embeddings. 

As before, an important aspect is that \hyperref[alg:merge_reduce]{\textsc{MergeAndReduce}} can handle value embeddings of dimension not necessarily equal to that of key and query embeddings. Thus, when run on scalars $v_i=1$ for all $i$, it can also be used to approximate softmax normalization at any point $j$ in the stream. This is the main subroutine used in \hyperref[alg:main]{\textsc{BalanceKV}} to approximate Attn$(q_j,K_j,V_j)$. Its pseudocode description is presented in \cref{sec:appendix_pseudocodes}, and its proof is in \cref{sec:appendix_proof_merge_reduce}. The formal guarantees are

\begin{restatable}{theorem}{MergeAndReduce}\label{thm:MergeAndReduce}
        For any $r, \varepsilon > 0$, any set of tokens $(q_1, k_1, v_1), \ldots, (q_n, k_n,v_n)$ where $q_j, k_j \in \mathbb{R}^d$ satisfy $\|q_j\|_2, \|k_j\|_2 \leq r$, $v_j \in \mathbb{R}^s$ for $s\leq d$ suppose,
    \[ \text{batch size } t = \widetilde{O}(\sqrt{s}e^{2r^2/\sqrt{d}}/{\varepsilon})
    \text{ and compression rate } 2^{-T} \text{ with } T = \log(n/t).\]
    Then \hyperref[alg:merge_reduce]{\textsc{MergeAndReduce}} on input parameters $t, r, d, s, \varepsilon$, outputs at every step $j$ of the stream subsets of key-value embedding pairs $C^0, \ldots, C^T \subset C := \{(k_1,v_1),\ldots,(k_n,v_n)\}$ such that
        \[z_j := \sum_{i = 0}^T 2^i\sum_{\{k, v\} \in C^i}\exp\left(\frac{\langle k, q_j\rangle}{\sqrt{d}}\right)v\]
    satisfies with probability at least $1 - 1/\text{poly}(n)$,
    \begin{align*}
    \left\|\sum_{i = 1}^j\exp\left(\frac{\langle k_i, q_j\rangle}{\sqrt{d}}\right)v_i - z_j\right\|_2 \leq \varepsilon j\cdot e^{-r^2/\sqrt{d}}\cdot\max_{i \in [n]}\|v_i\|_2.
    \end{align*}
    Total memory of the algorithm is $\widetilde{O}(d\sqrt{s}e^{2r^2/\sqrt{d}}/{\varepsilon})$, its $j$-th iteration runtime is $\widetilde{O}(dse^{4r^2/\sqrt{d}}/{\varepsilon^2})$.
\end{restatable}

\def\llama{}

\def\llama{$\mathtt{Llama}$-$\mathtt{3.1}$-$\mathtt{8B}$-$\mathtt{Instruct}$}

\section{Experiments}\label{sec:experiments_main}

In this section we now present our experimental results. The full details of all sections as well as the experimental setup and implementation can be found in \cref{sec:exp-details}.

\subsection{Ablation Studies on Single Layer Attention Approximation} \label{sec:single-layer}
We evaluate the effectiveness of \hyperref[alg:main]{\textsc{BalanceKV}} for approximating attention in individual layers of \llama{}~\cite{dubey2024llama} and \mistral{}~\cite{mistral} on the TriviaQA dataset from LongBench~\cite{bai2023longbench}. Specifically, we examine layers 1, 2, and 5 and compare against independent uniform sampling key and value embeddings. 

Due to space limitations, we provide the full experimental details in \cref{sec:appendix_single_layer}.
For each layer, we approximate attention for recent tokens using a compressed cache that retains a fixed number of initial and recent embeddings, alongside intermediate ones selected via \hyperref[alg:main]{\textsc{BalanceKV}}, and measure its relative error against exact attention. We vary the compression rate $2^{-T} \in \{1/2,1/4,1/8,1/16\}$. 
As shown in Fig.~\ref{fig:single_layer_triviaqa}, \hyperref[alg:main]{\textsc{BalanceKV}} consistently yields lower relative approximation error than approximating attention by uniform sampling past key value pairs  across all settings, empirically validating its advantage as predicted by \cref{thm:main-theorem}. 
\begin{figure}[t!]
    \centering
    \begin{subfigure}
        \centering
        \includegraphics[width=0.48\textwidth]{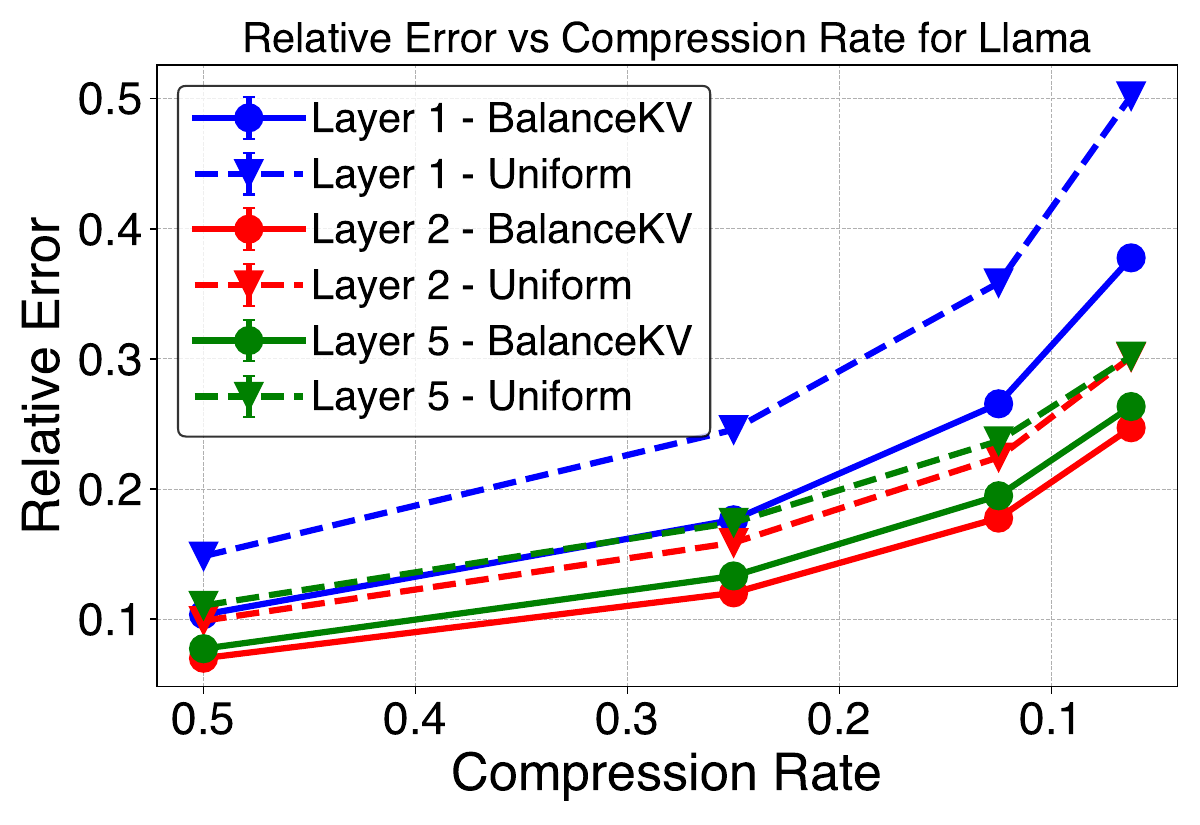}
    \end{subfigure}
    \begin{subfigure}
        \centering
        \includegraphics[width=0.48\textwidth]{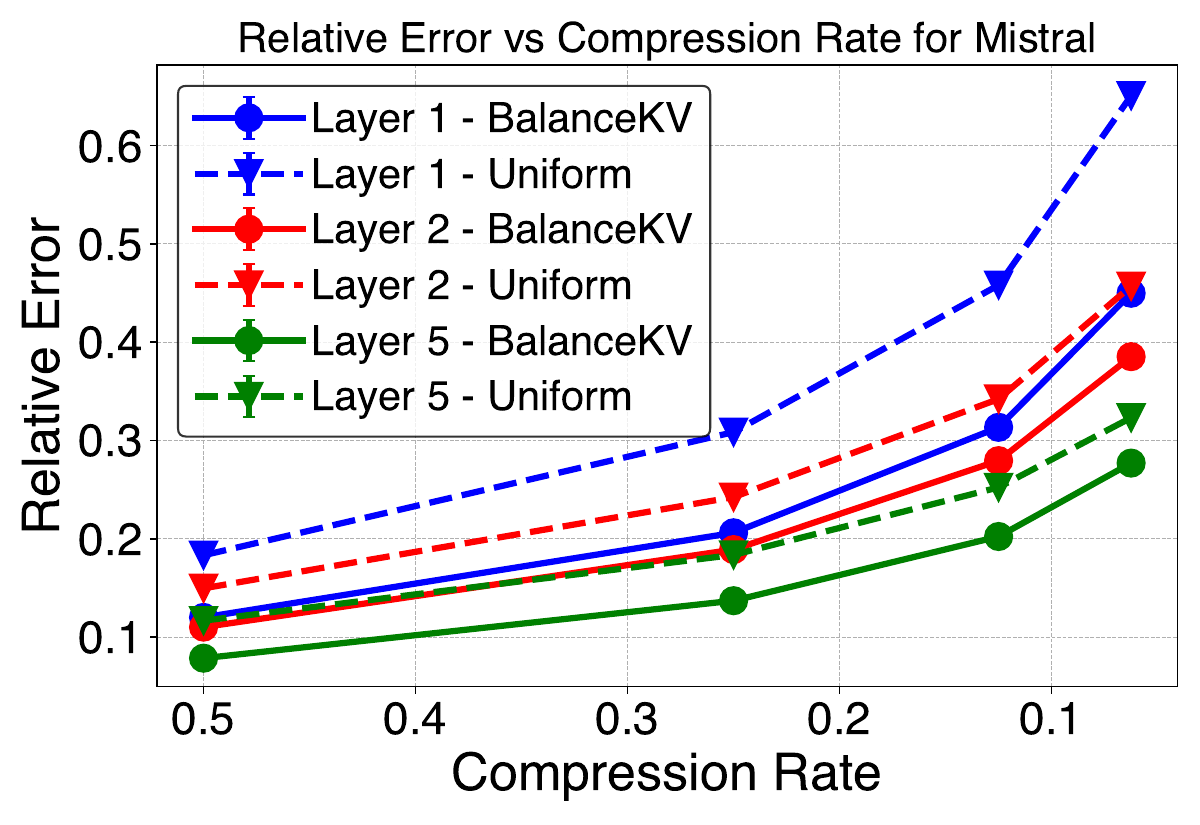}
    \end{subfigure}
    \vspace{-0.1in}
    \caption{Comparison of relative errors across different layers of \llama{} (left) and \mistral{} (right) on TriviaQA dataset.}
    \label{fig:single_layer_triviaqa}
\end{figure}

For a fixed dataset and layer, we also analyzed how the performance and runtime of \hyperref[alg:main]{\textsc{BalanceKV}} depend on the batch size and compression rate. More precisely, we repeat the single-layer attention approximation experiment TriviaQA and layers 1 and 15 of \llama{}, for batch size $\in [64,128,256]$ and compression rate $2^{-T}\in [1/2,1/4,1/8]$.  The results are presented in Figure \ref{fig:kv_runtime_error}. As this experiment suggests, the quality of attention approximation increases as the size of the block doubles, while the runtime becomes slower as also proven theoretically.

\subsection{End-to-end Evalution on LongBench} \label{sec:end-to-end}
Next we evaluate \hyperref[alg:main]{\textsc{BalanceKV}} on LongBench dataset~\cite{bai2023longbench}, which tests long-context understanding across tasks like QA, summarization, few-shot learning, synthetic reasoning, and code completion. Specifically, we test a version of uniform length distribution (LongBench-E). During inference, we compress the key-value cache in the prefill stage using a uniform compression rate of (approximately) 0.25 across all methods, while retaining all streamed embeddings during the decoding phase. We compare against StreamingLLM~\cite{xiao2023efficient}, SnapKV~\cite{li2024snapkv}, PyramidKV~\cite{cai2024pyramidkv}, and uniform sampling (see \cref{sec:single-layer}), using their implementations from MInference~\citep{jiang2024minference}. The evaluation follows the LongBench protocol, using three pre-trained models at different scales including \llama{}, \qwenfourteen{} and \qwenthirtytwo{}. The results are reported in \cref{tab:longbench_all}. 

Notably, \hyperref[alg:main]{\textsc{BalanceKV}} consistently achieves the best overall performance among compression methods and across all models, demonstrating its effectiveness in preserving model quality for cache compression. 
The full experimental setup details can be found in \cref{sec:appendix_end_to_end}.
\definecolor{modelgray}{gray}{0.9}
\definecolor{lightblue}{RGB}{230,245,255}
\begin{table}
\begin{center}
\begin{adjustbox}{max width=\linewidth}
\addtolength{\tabcolsep}{-0.3em}
\begin{tabular}{lcccccccccccccc}
\toprule
Method & qasper & multi & hotpotqa & 2wiki & gov & multinews & trec & triviaqa & samsum & p.count & p.ret & lcc & repo-p & average \\
\midrule
\rowcolor{lightblue}
\multicolumn{15}{c}{\textbf{Qwen2.5-32B-Instruct}} \\
Exact (Baseline) & 44.56 & 50.65 & 69.14 & 60.39 & 21.3 & 19.52 & 75.33 & 81.14 & 43.17 & 22.0 & 99.67 & 50.9 & 35.22 & 51.77 \\
StreamingLLM & 20.12 & 34.35 & 51.84 & 48.23 & 19.09 & 17.10 & 61.00 & 51.14 & 28.52 & 23.33 & 41.33 & 39.19 & 26.54 & 35.52 \\
PyramidKV & 34.47 & 46.33 & 67.78 & 55.92 & 15.16 & 15.39 & 69.33 & 63.24 & 40.36 & 22.67 & 99.33 & 48.32 & 34.35 & 47.13 \\
SnapKV & 36.21 & 46.78 & 66.64 & 57.02 & 16.35 & 16.07 & 70.33 & 77.53 & 41.08 & 22.0 & 99.33 & 49.04 & 35.62 & 48.77 \\
Uniform & 39.28 & 43.82	& 64.82 & 57.84 & 23.10 & 19.50 & 73.00	& 81.63 & 39.70 & 22.00 & 92.00	& 44.97 & 32.19 & 48.76 \\
\hyperref[alg:main]{\textsc{BalanceKV}} & 40.14 & 43.17 & 64.46 & 58.06 & 22.26 & 20.32 & 73.00 & 80.68 & 41.07 & 22.33 & 92.0 & 44.95 & 32.43 & {\bf 48.84} \\
\midrule
\rowcolor{lightblue}
\multicolumn{15}{c}{\textbf{Qwen2.5-14B-Instruct}} \\
Exact (Baseline) & 43.39 & 52.63 & 64.06 & 53.71 & 28.07 & 22.4 & 74.67 & 88.75 & 44.81 & 22.33 & 99.0 & 63.69 & 46.3 & 54.14 \\
StreamingLLM & 20.73 & 32.62 & 49.93 & 42.39 & 21.63 & 18.69 & 59.67 & 74.96 & 29.57 & 11.67 & 63.0 & 46.16 & 32.25 & 38.71 \\
PyramidKV & 31.76 & 46.6 & 62.83 & 50.0 & 19.08 & 17.68 & 65.0 & 85.52 & 42.61 & 22.0 & 99.33 & 60.62 & 44.4 & 49.8 \\
SnapKV & 32.95 & 47.53 & 61.96 & 50.34 & 20.29 & 18.28 & 60.67 & 88.75 & 42.97 & 22.67 & 99.33 & 61.32 & 45.84 & 50.22 \\
Uniform & 37.07 & 41.69 & 61.11 & 49.96 & 29.18 & 22.67 & 71.67 & 87.89 & 40.34 & 22.0 & 84.33 & 58.0 & 42.57 & 49.88 \\
\hyperref[alg:main]{\textsc{BalanceKV}} & 37.02 & 41.96 & 61.74 & 50.9 & 29.26 & 22.64 & 71.67 & 88.1 & 41.14 & 23.67 & 87.67 & 58.64 & 43.63 & {\bf 50.62} \\
\midrule
\rowcolor{lightblue}
\multicolumn{15}{c}{\textbf{Llama-3.1-8B-Instruct}} \\
Exact (Baseline) & 42.87 & 48.54 & 52.05 & 38.6 & 31.31 & 22.07 & 71.67 & 91.85 & 42.36 & 20.37 & 98.13 & 49.62 & 42.73 & 50.17 \\
StreamingLLM & 20.65 & 30.71 & 39.14 & 32.43 & 23.10 & 18.70 & 58.00 & 83.87 & 28.85 & 20.36 & 97.26 & 33.69 & 30.46 & 39.79 \\
PyramidKV & 33.86 & 39.75 & 47.12 & 35.96 & 20.03 & 17.78 & 63.67 & 90.76 & 40.21 & 20.4 & 98.97 & 45.25 & 39.51 & 45.64 \\
SnapKV & 33.91 & 42.55 & 49.09 & 36.13 & 20.48 & 17.67 & 62.0 & 91.7 & 40.23 & 20.33 & 98.86 & 46.7 & 39.86 & 46.12 \\
Uniform & 37.18 & 37.15 & 47.49 & 37.82 & 27.56 & 21.06 & 68.67 & 90.48 & 36.13 & 20.33 & 96.26 & 45.72 & 37.09 & 46.38 \\
\hyperref[alg:main]{\textsc{BalanceKV}} & 35.75 & 37.04 & 46.37 & 36.24 & 27.09 & 20.84 & 69.0 & 90.88 & 37.88 & 20.39 & 96.65 & 48.45 & 41.4 & {\bf 46.77} \\
\bottomrule
\end{tabular}
\end{adjustbox}
\end{center}
\caption{Comparison of various cache compression methods on LongBench-E using various models. The best results among compression methods for each model are highlighted in bold.}\label{tab:longbench_all}
\end{table}

\subsection{Needle-In-A-Haystack Benchmark}\label{sec:niah}
We evaluate \hyperref[alg:main]{\textsc{BalanceKV}} on the ``Needle-In-A-Haystack`` benchmark~\cite{kamradt2023needle}, comparing it against SnapKV, PyramidKV, StreamingLLM, and uniform sampling using \llama{}. 
The test challenges the model to retrieve a specific sentence (the ``needle'') embedded at an arbitrary position within a long context (the ``haystack'').
Following the setup in~\cite{fu2024data}, we hide the needle at varying depths, from 0\% to 100\% of the total context length, across documents ranging from approximately 4K to 100K tokens.
As in the previous experiments, all methods are evaluated under a fixed compression ratio of approximately 0.25.

To further enhance performance, we introduce an augmented version of \hyperref[alg:main]{\textsc{BalanceKV}} that deterministically preserves a small set of tokens whose key embeddings are strongly anti-correlated with the rest. The standard \hyperref[alg:main]{\textsc{BalanceKV}} procedure is then applied to the remaining tokens within each layer.
As a result, \hyperref[alg:main]{\textsc{BalanceKV}} achieves an average accuracy of 0.99, outperforming SnapKV (0.83), PyramidKV (0.90), StreamingLLM (0.31), and uniform sampling (0.90). Detailed heatmaps of performance across different context lengths and needle depths are in \cref{fig:niah} in \cref{sec:appendix_niah}.

\subsection{System Efficiency Metrics}\label{sec:efficiency_metrics}
We measure wall-clock times for both the prefill stage (including cache compression) and the decoding stage using a random input of length $16{,}384$ tokens, followed by the generation of $1{,}024$ tokens. Results are averaged over $10$ independent runs, with the minimum runtime reported to enhance robustness. The full results are provided in~\cref{tab:efficiency_metrics} in \cref{sec:appendix_system_efficiency}.

All compression methods incur some prefill overhead compared to the uncompressed baseline (Exact). While StreamingLLM achieves the fastest decoding speed, it suffers from significantly lower accuracy (see \cref{tab:longbench_all}). 
Among the remaining methods, \hyperref[alg:main]{\textsc{BalanceKV}} achieves the lowest prefill latency and consistently delivers the best trade-off between efficiency and accuracy.
This demonstrates that our discrepancy-based approach not only scales well in theory but also brings practical gains in end-to-end system performance, making it a compelling choice for real-world deployment scenarios.

\subsection{Additional Experiments}\label{sec:additional_main} We additionally conduct the following experiments and provide their results in \cref{sec:additional} due to the space limitation.
\begin{enumerate}
    \item Our main theorem (\cref{thm:MergeAndReduce}) relies on an upper bound of $\ell_2$ norms of both query and key vectors. To validate this, we investigate the $\ell_2$ norms of queries, keys, and values (QKV) on the TriviaQA dataset from LongBench \cite{bai2023longbench} using \llama{}. Specifically, we analyze prompts in TriviaQA and compute the average $\ell_2$ norms of all QKV vectors across all layers and attention heads during the prefill stage. The key findings are that all QKV norms consistently concentrate around some constants (15 for query, 15 for key, and 3 for value) with small confidence intervals (CI). Importantly, the norms remain stable across a wide range of sequence lengths, suggesting that these norms do not grow with input sequence length. 
    \item We perform the evaluation of \hyperref[alg:main]{\textsc{BalanceKV}} and the uniform sampling when applied to the \intern{} multimodal LLM for compression rates 1/4 and 1/16, for evaluation on the MS COCO image captioning dataset. The experiment was run on a NVIDIA A100 GPU with 80 GB VRAM.
    \item We repeat the experiment in \cref{sec:end-to-end} in the extremely low compression rate regime on some of the datasets from LongBench \cite{bai2023longbench}. More specifically, we compress the key-value cache in the prefill stage of inference using a uniform compression rate of (approximately) 0.8, 0.9, and 0.95 with uniform sampling as well as \hyperref[alg:main]{\textsc{BalanceKV}}, while retaining all streamed embeddings during the decoding phase. \hyperref[alg:main]{\textsc{BalanceKV}} demonstrates improved performance over uniform sampling across each of the compression rates and datasets.
    \item We augment \Cref{sec:end-to-end} by adding comparison to ClusterGen \cite{zandieh2024subgen} using \llama{}. The results are reported in \cref{table:subgen}.
\end{enumerate}

\section{Conclusion}
We propose \textsc{BalanceKV}, a token pruning method grounded in discrepancy theory. \textsc{BalanceKV} enables approximate attention computation, which we both establish theoretically and validate empirically. To demonstrate the effectiveness of \textsc{BalanceKV} as a KV cache compression algorithm, we conduct end-to-end experiments on a range of popular benchmarks and models of varying sizes. Finally, our work introduces a theoretical problem of optimal streaming attention space complexity.

\nocite{*}
\bibliography{arxiv_camera_ready}

@Article{ALS21,
  title= "Discrepancy minimization via a self-balancing walk",
  author= "Ryan Alweiss and Yang P. Liu and Mehtaab Sawhney",
  journal="Proceedings of the 53rd ACM Symposium on the Theory of Computing (STOC ’2021)",
  year="2021"
}

@inproceedings{Bessa2023WeightedMinhash,
author = {Bessa, Aline and Daliri, Majid and Freire, Juliana and Musco, Cameron and Musco, Christopher and Santos, A\'{e}cio and Zhang, Haoxiang},
title = {Weighted Minwise Hashing Beats Linear Sketching for Inner Product Estimation},
year = {2023},
isbn = {9798400701276},
publisher = {Association for Computing Machinery},
address = {New York, NY, USA},
url = {https://doi.org/10.1145/3584372.3588679},
doi = {10.1145/3584372.3588679},
abstract = {We present a new approach for independently computing compact sketches that can be used to approximate the inner product between pairs of high-dimensional vectors. Based on the Weighted MinHash algorithm, our approach admits strong accuracy guarantees that improve on the guarantees of popular linear sketching approaches for inner product estimation, such as CountSketch and Johnson-Lindenstrauss projection. Specifically, while our method exactly matches linear sketching for dense vectors, it yields significantly lower error for sparse vectors with limited overlap between non-zero entries. Such vectors arise in many applications involving sparse data, as well as in increasingly popular dataset search applications, where inner products are used to estimate data covariance, conditional means, and other quantities involving columns in unjoined tables. We complement our theoretical results by showing that our approach empirically outperforms existing linear sketches and unweighted hashing-based sketches for sparse vectors.},
booktitle = {Proceedings of the 42nd ACM SIGMOD-SIGACT-SIGAI Symposium on Principles of Database Systems},
pages = {169–181},
numpages = {13},
keywords = {join-size estimation, vector sketching, inner product estimation},
location = {<conf-loc>, <city>Seattle</city>, <state>WA</state>, <country>USA</country>, </conf-loc>},
series = {PODS '23}
}

@article{dalirisampling:2024,
      title="Sampling Methods for Inner Product Sketching", 
      author="Majid Daliri and Juliana Freire and Christopher Musco and Aécio Santos and Haoxiang Zhang",
      journal = "Proc. VLDB Endow.",
      year="2024"
}

@incollection{boucheron2003concentration,
  title={Concentration inequalities},
  author={Boucheron, St{\'e}phane and Lugosi, G{\'a}bor and Bousquet, Olivier},
  booktitle={Summer school on machine learning},
  pages={208--240},
  year={2003},
  publisher={Springer}
}

@article{liu2024kivi,
  title={KIVI: A Tuning-Free Asymmetric 2bit Quantization for KV Cache},
  author={Liu, Zirui and Yuan, Jiayi and Jin, Hongye and Zhong, Shaochen and Xu, Zhaozhuo and Braverman, Vladimir and Chen, Beidi and Hu, Xia},
  journal={arXiv preprint arXiv:2402.02750},
  year={2024}
}

@article{hooper2024kvquant,
  title={KVQuant: Towards 10 Million Context Length LLM Inference with KV Cache Quantization},
  author={Hooper, Coleman and Kim, Sehoon and Mohammadzadeh, Hiva and Mahoney, Michael W and Shao, Yakun Sophia and Keutzer, Kurt and Gholami, Amir},
  journal={arXiv preprint arXiv:2401.18079},
  year={2024}
}

@article{shazeer2019fast,
  title={Fast transformer decoding: One write-head is all you need},
  author={Shazeer, Noam},
  journal={arXiv preprint arXiv:1911.02150},
  year={2019}
}

@inproceedings{ainslie2023gqa,
  title={GQA: Training Generalized Multi-Query Transformer Models from Multi-Head Checkpoints},
  author={Ainslie, Joshua and Lee-Thorp, James and de Jong, Michiel and Zemlyanskiy, Yury and Lebron, Federico and Sanghai, Sumit},
  booktitle={Proceedings of the 2023 Conference on Empirical Methods in Natural Language Processing},
  pages={4895--4901},
  year={2023}
}

@article{zhang2024h2o,
  title={H2o: Heavy-hitter oracle for efficient generative inference of large language models},
  author={Zhang, Zhenyu and Sheng, Ying and Zhou, Tianyi and Chen, Tianlong and Zheng, Lianmin and Cai, Ruisi and Song, Zhao and Tian, Yuandong and R{\'e}, Christopher and Barrett, Clark and others},
  journal={Advances in Neural Information Processing Systems},
  volume={36},
  year={2024}
}

@article{liu2024scissorhands,
  title={Scissorhands: Exploiting the persistence of importance hypothesis for llm kv cache compression at test time},
  author={Liu, Zichang and Desai, Aditya and Liao, Fangshuo and Wang, Weitao and Xie, Victor and Xu, Zhaozhuo and Kyrillidis, Anastasios and Shrivastava, Anshumali},
  journal={Advances in Neural Information Processing Systems},
  volume={36},
  year={2024}
}

@article{xiao2023efficient,
  title={Efficient streaming language models with attention sinks},
  author={Xiao, Guangxuan and Tian, Yuandong and Chen, Beidi and Han, Song and Lewis, Mike},
  journal={arXiv preprint arXiv:2309.17453},
  year={2023}
}

@article{zandieh2024subgen,
  title={SubGen: Token Generation in Sublinear Time and Memory},
  author={Zandieh, Amir and Han, Insu and Mirrokni, Vahab and Karbasi, Amin},
  journal={arXiv preprint arXiv:2402.06082},
  year={2024}
}

@inproceedings{kwon2023efficient,
  title={Efficient memory management for large language model serving with pagedattention},
  author={Kwon, Woosuk and Li, Zhuohan and Zhuang, Siyuan and Sheng, Ying and Zheng, Lianmin and Yu, Cody Hao and Gonzalez, Joseph and Zhang, Hao and Stoica, Ion},
  booktitle={Proceedings of the 29th Symposium on Operating Systems Principles},
  pages={611--626},
  year={2023}
}

@inproceedings{sheng2023flexgen,
  title={Flexgen: High-throughput generative inference of large language models with a single gpu},
  author={Sheng, Ying and Zheng, Lianmin and Yuan, Binhang and Li, Zhuohan and Ryabinin, Max and Chen, Beidi and Liang, Percy and R{\'e}, Christopher and Stoica, Ion and Zhang, Ce},
  booktitle={International Conference on Machine Learning},
  pages={31094--31116},
  year={2023},
  organization={PMLR}
}

@article{dettmers2024qlora,
  title={Qlora: Efficient finetuning of quantized llms},
  author={Dettmers, Tim and Pagnoni, Artidoro and Holtzman, Ari and Zettlemoyer, Luke},
  journal={Advances in Neural Information Processing Systems},
  volume={36},
  year={2024}
}

@article{frantar2022gptq,
  title={Gptq: Accurate post-training quantization for generative pre-trained transformers},
  author={Frantar, Elias and Ashkboos, Saleh and Hoefler, Torsten and Alistarh, Dan},
  journal={arXiv preprint arXiv:2210.17323},
  year={2022}
}

@article{dettmers2023spqr,
  title={Spqr: A sparse-quantized representation for near-lossless llm weight compression},
  author={Dettmers, Tim and Svirschevski, Ruslan and Egiazarian, Vage and Kuznedelev, Denis and Frantar, Elias and Ashkboos, Saleh and Borzunov, Alexander and Hoefler, Torsten and Alistarh, Dan},
  journal={arXiv preprint arXiv:2306.03078},
  year={2023}
}

@article{yue2024wkvquant,
  title={Wkvquant: Quantizing weight and key/value cache for large language models gains more},
  author={Yue, Yuxuan and Yuan, Zhihang and Duanmu, Haojie and Zhou, Sifan and Wu, Jianlong and Nie, Liqiang},
  journal={arXiv preprint arXiv:2402.12065},
  year={2024}
}

@article{yang2024no,
  title={No Token Left Behind: Reliable KV Cache Compression via Importance-Aware Mixed Precision Quantization},
  author={Yang, June Yong and Kim, Byeongwook and Bae, Jeongin and Kwon, Beomseok and Park, Gunho and Yang, Eunho and Kwon, Se Jung and Lee, Dongsoo},
  journal={arXiv preprint arXiv:2402.18096},
  year={2024}
}

@article{dong2024qaq,
  title={QAQ: Quality Adaptive Quantization for LLM KV Cache},
  author={Dong, Shichen and Cheng, Wen and Qin, Jiayu and Wang, Wei},
  journal={arXiv preprint arXiv:2403.04643},
  year={2024}
}

@article{kang2024gear,
  title={Gear: An efficient kv cache compression recipefor near-lossless generative inference of llm},
  author={Kang, Hao and Zhang, Qingru and Kundu, Souvik and Jeong, Geonhwa and Liu, Zaoxing and Krishna, Tushar and Zhao, Tuo},
  journal={arXiv preprint arXiv:2403.05527},
  year={2024}
}

@article{zhang2024kv,
  title={KV Cache is 1 Bit Per Channel: Efficient Large Language Model Inference with Coupled Quantization},
  author={Zhang, Tianyi and Yi, Jonah and Xu, Zhaozhuo and Shrivastava, Anshumali},
  journal={arXiv preprint arXiv:2405.03917},
  year={2024}
}

@article{bai2023longbench,
  title={LongBench: A Bilingual, Multitask Benchmark for Long Context Understanding},
  author={Bai, Yushi and Lv, Xin and Zhang, Jiajie and Lyu, Hongchang and Tang, Jiankai and Huang, Zhidian and Du, Zhengxiao and Liu, Xiao and Zeng, Aohan and Hou, Lei and Dong, Yuxiao and Tang, Jie and Li, Juanzi},
  journal={arXiv preprint arXiv:2308.14508},
  year={2023}
}

@article{pope2023efficiently,
  title={Efficiently scaling transformer inference},
  author={Pope, Reiner and Douglas, Sholto and Chowdhery, Aakanksha and Devlin, Jacob and Bradbury, James and Heek, Jonathan and Xiao, Kefan and Agrawal, Shivani and Dean, Jeff},
  journal={Proceedings of Machine Learning and Systems},
  volume={5},
  year={2023}
}

@article{dasgupta2003elementary,
  title={An elementary proof of a theorem of Johnson and Lindenstrauss},
  author={Dasgupta, Sanjoy and Gupta, Anupam},
  journal={Random Structures \& Algorithms},
  volume={22},
  number={1},
  pages={60--65},
  year={2003},
  publisher={Wiley Online Library}
}

@inproceedings{charikar2002similarity,
  title={Similarity estimation techniques from rounding algorithms},
  author={Charikar, Moses S},
  booktitle={Proceedings of the thiry-fourth annual ACM symposium on Theory of computing},
  pages={380--388},
  year={2002}
}

@inproceedings{ruiz2023dreambooth,
  title={Dreambooth: Fine tuning text-to-image diffusion models for subject-driven generation},
  author={Ruiz, Nataniel and Li, Yuanzhen and Jampani, Varun and Pritch, Yael and Rubinstein, Michael and Aberman, Kfir},
  booktitle={Proceedings of the IEEE/CVF Conference on Computer Vision and Pattern Recognition},
  pages={22500--22510},
  year={2023}
}

@misc{sora,
  title = {Sora: Creating video from text},
  author = {OpenAI Team},
  note  =  {\url{https://openai.com/index/sora/}},
  year = 2024,
}

@misc{firefly,
  title = {Adobe FireFly},
  author = {FireFly Team},
  note  =  {\url{https://firefly.adobe.com/}},
  year = 2023,
}

@misc{midjourney,
  title = {Midjourney},
  author = {Midjourney Team},
  note  =  {\url{https://www.midjourney.com/home}},
  year = 2022,
}

@misc{copilot,
  title = {Microsoft Copilot},
  author = {Microsoft Copilot Team},
  note  =  {\url{https://github.com/features/copilot}},
  year = 2023,
}

@misc{Claude,
  title = {claude},
  author = {Antropic Team},
  note  =  {\url{https://www.anthropic.com/news/claude-3-family}},
  year = 2024,
}

@article{achiam2023gpt,
  title={Gpt-4 technical report},
  author={Achiam, Josh and Adler, Steven and Agarwal, Sandhini and Ahmad, Lama and Akkaya, Ilge and Aleman, Florencia Leoni and Almeida, Diogo and Altenschmidt, Janko and Altman, Sam and Anadkat, Shyamal and others},
  journal={arXiv preprint arXiv:2303.08774},
  year={2023}
}

@misc{lectures,
  author       = {David Woodruff},
  title        = {CS 15-859: Algorithms for Big Data - Lecture 11},
  howpublished = {\url{https://www.cs.cmu.edu/afs/cs/user/dwoodruf/www/teaching/15859-fall20/Scribe_Lecture_11-1.pdf?utm_source=chatgpt.com}},
  year         = {2020},

}

@article{reid2024gemini,
  title={Gemini 1.5: Unlocking multimodal understanding across millions of tokens of context},
  author={Reid, Machel and Savinov, Nikolay and Teplyashin, Denis and Lepikhin, Dmitry and Lillicrap, Timothy and Alayrac, Jean-baptiste and Soricut, Radu and Lazaridou, Angeliki and Firat, Orhan and Schrittwieser, Julian and others},
  journal={arXiv preprint arXiv:2403.05530},
  year={2024}
}

@misc{gpt4o,
  title={Introducing GPT-4o},
  author={OpenAI},
  note={\url{https://openai.com/index/hello-gpt-4o/}},
  year={2024},
}

@article{ramesh2022hierarchical,
  title={Hierarchical text-conditional image generation with clip latents},
  author={Ramesh, Aditya and Dhariwal, Prafulla and Nichol, Alex and Chu, Casey and Chen, Mark},
  journal={arXiv preprint arXiv:2204.06125},
  year={2022}
}

@article{vaswani2017attention,
  title={Attention is all you need},
  author={Vaswani, Ashish and Shazeer, Noam and Parmar, Niki and Uszkoreit, Jakob and Jones, Llion and Gomez, Aidan N and Kaiser, Lukasz and Polosukhin, Illia},
  journal=neurips,
  year={2017}
}

@article{kaplan2020scaling,
  title={Scaling laws for neural language models},
  author={Kaplan, Jared and McCandlish, Sam and Henighan, Tom and Brown, Tom B and Chess, Benjamin and Child, Rewon and Gray, Scott and Radford, Alec and Wu, Jeffrey and Amodei, Dario},
  journal={arXiv preprint arXiv:2001.08361},
  year={2020}
}

@article{johnson1986extensions,
  title={Extensions of Lipschitz maps into Banach spaces},
  author={Johnson, William B and Lindenstrauss, Joram and Schechtman, Gideon},
  journal={Israel Journal of Mathematics},
  volume={54},
  number={2},
  pages={129--138},
  year={1986},
  publisher={Springer}
}

@article{touvron2023llama,
  title={Llama 2: Open foundation and fine-tuned chat models},
  author={Touvron, Hugo and Martin, Louis and Stone, Kevin and Albert, Peter and Almahairi, Amjad and Babaei, Yasmine and Bashlykov, Nikolay and Batra, Soumya and Bhargava, Prajjwal and Bhosale, Shruti and others},
  journal={arXiv preprint arXiv:2307.09288},
  year={2023}
}

@misc{longchat2023,
    title = {How Long Can Open-Source LLMs Truly Promise on Context Length?},
    url = {https://lmsys.org/blog/2023-06-29-longchat},
    author={Li, Dacheng and Shao, Rulin and Xie, Anze and Sheng, Ying and Zheng, Lianmin and Gonzalez, Joseph and Stoica, Ion and Ma, Xuezhe and Zhang, Hao},
    year = {2023},
    note = {{\url{https://huggingface.co/lmsys/longchat-7b-v1.5-32k}}},
}

@article{dettmers2022gpt3,
  title={Gpt3. int8 (): 8-bit matrix multiplication for transformers at scale},
  author={Dettmers, Tim and Lewis, Mike and Belkada, Younes and Zettlemoyer, Luke},
  journal={Advances in Neural Information Processing Systems},
  volume={35},
  pages={30318--30332},
  year={2022}
}

@article{lin2023awq,
  title={Awq: Activation-aware weight quantization for llm compression and acceleration},
  author={Lin, Ji and Tang, Jiaming and Tang, Haotian and Yang, Shang and Dang, Xingyu and Han, Song},
  journal={arXiv preprint arXiv:2306.00978},
  year={2023}
}

@article{yu2016orthogonal,
  title={Orthogonal random features},
  author={Yu, Felix Xinnan X and Suresh, Ananda Theertha and Choromanski, Krzysztof M and Holtmann-Rice, Daniel N and Kumar, Sanjiv},
  journal={Advances in neural information processing systems},
  volume={29},
  year={2016}
}

@article{ji2012super,
  title={Super-bit locality-sensitive hashing},
  author={Ji, Jianqiu and Li, Jianmin and Yan, Shuicheng and Zhang, Bo and Tian, Qi},
  journal={Advances in neural information processing systems},
  volume={25},
  year={2012}
}

@misc{llama3,
  title = {Llama3},
  author = {Llama3 Team},
  note  =  {\url{https://github.com/meta-llama/llama3}},
  year = 2024,
}

@misc{mistral,
  title = {Mistral AI},
  author = {{Mistral AI team}},
  note  =  {\url{https://mistral.ai/news/ministraux/}},
  year = 2024,
}

@misc{evalharness,
  author       = {Gao, Leo and Tow, Jonathan and Abbasi, Baber and Biderman, Stella and Black, Sid and DiPofi, Anthony and Foster, Charles and Golding, Laurence and Hsu, Jeffrey and Le Noac'h, Alain and Li, Haonan and McDonell, Kyle and Muennighoff, Niklas and Ociepa, Chris and Phang, Jason and Reynolds, Laria and Schoelkopf, Hailey and Skowron, Aviya and Sutawika, Lintang and Tang, Eric and Thite, Anish and Wang, Ben and Wang, Kevin and Zou, Andy},
  title        = {A framework for few-shot language model evaluation},
  year         = 2023,
  publisher    = {Zenodo},
  note = {\url{https://github.com/EleutherAI/lm-evaluation-harness}}
}

@article{dao2023flashattention,
  title={Flashattention-2: Faster attention with better parallelism and work partitioning},
  author={Dao, Tri},
  journal={arXiv preprint arXiv:2307.08691},
  year={2023}
}

@InProceedings{zandieh23kdeformer,
  title = 	 {{KDE}former: Accelerating Transformers via Kernel Density Estimation},
  author =       {Zandieh, Amir and Han, Insu and Daliri, Majid and Karbasi, Amin},
  booktitle = 	 {Proceedings of the 40th International Conference on Machine Learning},
  pages = 	 {40605--40623},
  year = 	 {2023},
  editor = 	 {Krause, Andreas and Brunskill, Emma and Cho, Kyunghyun and Engelhardt, Barbara and Sabato, Sivan and Scarlett, Jonathan},
  volume = 	 {202},
  series = 	 {Proceedings of Machine Learning Research},
  month = 	 {23--29 Jul},
  publisher =    {PMLR},
  pdf = 	 {https://proceedings.mlr.press/v202/zandieh23a/zandieh23a.pdf},
  url = 	 {https://proceedings.mlr.press/v202/zandieh23a.html},
  abstract = 	 {Dot-product attention mechanism plays a crucial role in modern deep architectures (e.g., Transformer) for sequence modeling, however, naïve exact computation of this model incurs quadratic time and memory complexities in sequence length, hindering the training of long-sequence models. Critical bottlenecks are due to the computation of partition functions in the denominator of softmax function as well as the multiplication of the softmax matrix with the matrix of values. Our key observation is that the former can be reduced to a variant of the kernel density estimation (KDE) problem, and an efficient KDE solver can be further utilized to accelerate the latter via subsampling-based fast matrix products. Our proposed KDEformer can approximate the attention in sub-quadratic time with provable spectral norm bounds, while all prior results merely provide entry-wise error bounds. Empirically, we verify that KDEformer outperforms other attention approximations in terms of accuracy, memory, and arithmetic operations on various pre-trained models. For instance, on BigGAN image generation we achieve better generative scores than the exact computation with over 4× speedup. For ImageNet classification with T2T-ViT, KDEformer shows over 18× speedup while the accuracy drop is less than 0.5%.}
}

@article{hyperattention,
  title={Hyperattention: Long-context attention in near-linear time},
  author={Han, Insu and Jarayam, Rajesh and Karbasi, Amin and Mirrokni, Vahab and Woodruff, David and Zandieh, Amir},
  journal={arXiv preprint arXiv:2310.05869},
  year={2023}
}

@misc{daliri2024samplingmethodsinnerproduct,
      title={Sampling Methods for Inner Product Sketching}, 
      author={Majid Daliri and Juliana Freire and Christopher Musco and Aécio Santos and Haoxiang Zhang},
      year={2024},
      eprint={2309.16157},
      archivePrefix={arXiv},
      primaryClass={cs.DB},
      url={https://arxiv.org/abs/2309.16157}, 
}

@article{matsumoto2024compresedsensing1bit,
author = {Matsumoto, Namiko and Mazumdar, Arya},
title = {Binary Iterative Hard Thresholding Converges with Optimal Number of Measurements for 1-Bit Compressed Sensing},
year = {2024},
issue_date = {October 2024},
publisher = {Association for Computing Machinery},
address = {New York, NY, USA},
volume = {71},
number = {5},
issn = {0004-5411},
url = {https://doi.org/10.1145/3680542},
doi = {10.1145/3680542},
abstract = {Compressed sensing has been a very successful high-dimensional signal acquisition and recovery technique that relies on linear operations. However, the actual measurements of signals have to be quantized before storing or processing them. One-bit compressed sensing is a heavily quantized version of compressed sensing, where each linear measurement of a signal is reduced to just one bit: the sign of the measurement. Once enough of such measurements are collected, the recovery problem in one-bit compressed sensing aims to find the original signal with as much accuracy as possible. The recovery problem is related to the traditional “halfspace-learning” problem in learning theory.&nbsp;&nbsp; For recovery of sparse vectors, a popular reconstruction method from one-bit measurements is the binary iterative hard thresholding (BIHT) algorithm. The algorithm is a simple projected subgradient descent method and is known to converge well empirically, despite the nonconvexity of the problem. The convergence property of BIHT was not theoretically fully justified (e.g., it is known that a number of measurement greater than  (max lbrace k^{10}, 24^{48}, k^{3.5}/epsilon rbrace) , where k is the sparsity and  (epsilon)  denotes the approximation error, is sufficient, Friedlander et&nbsp;al. [2021]. In this article we show that the BIHT estimates converge to the original signal with only  (frac{k}{epsilon })  measurements (up to logarithmic factors). Note that, this dependence on k and  (epsilon)  is optimal for any recovery method in one-bit compressed sensing. With this result, to the best of our knowledge, BIHT is the only practical and efficient (polynomial time) algorithm that requires the optimal number of measurements in all parameters (both k and  (epsilon) ). This is also an example of a gradient descent algorithm converging to the correct solution for a nonconvex problem under suitable structural conditions.},
journal = {J. ACM},
month = oct,
articleno = {35},
numpages = {64},
keywords = {Compressed sensing, quantization, gradient descent, sparsity}
}

@article{plan2017high,
  title={High-dimensional estimation with geometric constraints},
  author={Plan, Yaniv and Vershynin, Roman and Yudovina, Elena},
  journal={Information and Inference: A Journal of the IMA},
  volume={6},
  number={1},
  pages={1--40},
  year={2017},
  publisher={Oxford University Press}
}

@article{gao2024rabitq,
  title={RaBitQ: Quantizing High-Dimensional Vectors with a Theoretical Error Bound for Approximate Nearest Neighbor Search},
  author={Gao, Jianyang and Long, Cheng},
  journal={Proceedings of the ACM on Management of Data},
  volume={2},
  number={3},
  pages={1--27},
  year={2024},
  publisher={ACM New York, NY, USA}
}

@article{dai2024deepseekmoe,
  title={Deepseekmoe: Towards ultimate expert specialization in mixture-of-experts language models},
  author={Dai, Damai and Deng, Chengqi and Zhao, Chenggang and Xu, RX and Gao, Huazuo and Chen, Deli and Li, Jiashi and Zeng, Wangding and Yu, Xingkai and Wu, Y and others},
  journal={arXiv preprint arXiv:2401.06066},
  year={2024}
}

@article{beltagy2020longformer,
  title={Longformer: The long-document transformer},
  author={Beltagy, Iz and Peters, Matthew E and Cohan, Arman},
  journal={arXiv preprint arXiv:2004.05150},
  year={2020}
}

@article{li2024snapkv,
  title={Snapkv: Llm knows what you are looking for before generation},
  author={Li, Yuhong and Huang, Yingbing and Yang, Bowen and Venkitesh, Bharat and Locatelli, Acyr and Ye, Hanchen and Cai, Tianle and Lewis, Patrick and Chen, Deming},
  journal={arXiv preprint arXiv:2404.14469},
  year={2024}
}

@article{sun2024shadowkv,
  title={Shadowkv: Kv cache in shadows for high-throughput long-context llm inference},
  author={Sun, Hanshi and Chang, Li-Wen and Bao, Wenlei and Zheng, Size and Zheng, Ningxin and Liu, Xin and Dong, Harry and Chi, Yuejie and Chen, Beidi},
  journal={arXiv preprint arXiv:2410.21465},
  year={2024}
}

@article{zandieh2024qjl,
  title={QJL: 1-Bit Quantized JL Transform for KV Cache Quantization with Zero Overhead},
  author={Zandieh, Amir and Daliri, Majid and Han, Insu},
  journal={arXiv preprint arXiv:2406.03482},
  year={2024}
}

@article{kim2024lexico,
  title={Lexico: Extreme KV Cache Compression via Sparse Coding over Universal Dictionaries},
  author={Kim, Junhyuck and Park, Jongho and Cho, Jaewoong and Papailiopoulos, Dimitris},
  journal={arXiv preprint arXiv:2412.08890},
  year={2024}
}

@article{ashkboos2024quarot,
  title={Quarot: Outlier-free 4-bit inference in rotated llms},
  author={Ashkboos, Saleh and Mohtashami, Amirkeivan and Croci, Maximilian L and Li, Bo and Cameron, Pashmina and Jaggi, Martin and Alistarh, Dan and Hoefler, Torsten and Hensman, James},
  journal={arXiv preprint arXiv:2404.00456},
  year={2024}
}

@article{shah2024flashattention,
  title={Flashattention-3: Fast and accurate attention with asynchrony and low-precision},
  author={Shah, Jay and Bikshandi, Ganesh and Zhang, Ying and Thakkar, Vijay and Ramani, Pradeep and Dao, Tri},
  journal={arXiv preprint arXiv:2407.08608},
  year={2024}
}

@article{dubey2024llama,
  title={The llama 3 herd of models},
  author={Dubey, Abhimanyu and Jauhri, Abhinav and Pandey, Abhinav and Kadian, Abhishek and Al-Dahle, Ahmad and Letman, Aiesha and Mathur, Akhil and Schelten, Alan and Yang, Amy and Fan, Angela and others},
  journal={arXiv preprint arXiv:2407.21783},
  year={2024}
}

@article{paszke2019pytorch,
  title={Pytorch: An imperative style, high-performance deep learning library},
  author={Paszke, Adam and Gross, Sam and Massa, Francisco and Lerer, Adam and Bradbury, James and Chanan, Gregory and Killeen, Trevor and Lin, Zeming and Gimelshein, Natalia and Antiga, Luca and others},
  journal={Advances in neural information processing systems},
  volume={32},
  year={2019}
}

@article{wolf2019huggingface,
  title={Huggingface’s transformers: State-of-the-art natural language processing. arXiv},
  author={Wolf, Thomas and Debut, Lysandre and Sanh, Victor and Chaumond, Julien and Delangue, Clement and Moi, Anthony and Cistac, Pierric and Rault, Tim and Louf, R{\'e}mi and Funtowicz, Morgan and others},
  journal={arXiv preprint arXiv:1910.03771},
  year={2019}
}

@article{cai2024pyramidkv,
  title={Pyramidkv: Dynamic kv cache compression based on pyramidal information funneling},
  author={Cai, Zefan and Zhang, Yichi and Gao, Bofei and Liu, Yuliang and Liu, Tianyu and Lu, Keming and Xiong, Wayne and Dong, Yue and Chang, Baobao and Hu, Junjie and others},
  journal={arXiv preprint arXiv:2406.02069},
  year={2024}
}

@article{BHMSSZ21,
  title={Adversarial robustness of streaming algorithms through importance sampling},
  author={Braverman, Vladimir and Hassidim, Avinatan and Matias, Yossi and Schain, Mariano and Silwal, Sandeep and Zhou, Samson},
  journal={Advances in Neural Information Processing Systems},
  volume={34},
  pages={3544--3557},
  year={2021}
}

@article{MG82,
  title={Finding repeated elements},
  author={Misra, Jayadev and Gries, David},
  journal={Sci. Comput. Program., 2(2):143– 152},
  year={1982}

}

@article{GLPW16,
    author = {Ghashami, Mina and Liberty, Edo and Phillips, Jeff M. and Woodruff, David P},
    title = {Frequent directions: Simple and deterministic matrix sketching},
    journal = {SIAM J. Comput., 45(5):1762–1792},
    year = {2016}
}

@article{PT20,
    author = {Phillips, Jeff M and Tai, Wai Ming},
    title = {Near-optimal coresets for kernel density estimates},
    journal = {Discrete and Computational Geometry, 63(4):867–887},
    year = {2020}
}

@article{CKW24,
    author = {Charikar, Moses and Kapralov, Michael and Waingarten, Erik},
    title = {A Quasi-Monte Carlo Data Structure for Smooth Kernel Evaluations},
    journal = {In Proceedings of the 35th ACM-SIAM Symposium on Discrete Algorithms (SODA ’2024), arXiv:2401.02562},
    year = {2024}
}

@article{KRR23,
    author = {Kulkarni, Janardhan and Reis, Victor and Rothvoss, Thomas},
    title = {Optimal Online Discrepancy Minimization},
    journal = {In Proceedings of the 56th Annual ACM Symposium on Theory of Computing (STOC ’2024), arXiv:2308.01406},
    year = {2023}
}

@article{B10,
    author = {Bansal, Nikhil},
    title = {Constructive algorithms for discrepancy minimization},
    journal = {51th Annual IEEE Symposium on Foundations of Computer Science (FOCS ’2010), arXiv:1002.2259},
    year = {2010}
}

@article{BJSS19,
    author = {Bansal, Nikhil and Jiang, Haotian and Singla, Sahil and Sinha, Makrand},
    title = {Online vector balancing and geometric discrepancy},
    journal = {In Proceedings of the 52nd Annual ACM Symposium on Theory of Computing (STOC ’2020), arXiv:1912.03350},
    year = {2019}
}

@article{DNTT18,
    author = {Dadush, Daniel and Nikolov, Aleksandar and Talwar, Kunal and Tomczak-Jaegermann, Nicole},
    title = {Balancing vectors in any
norm},
    journal = {59th Annual IEEE Symposium on Foundations of Computer Science (FOCS ’2018)},
    year = {2018} 
}

@article{B12,
    author = {Banaszczyk, Wojciech},
    title = {On series of signed vectors and their rearrangements},
    journal = {Random Structures and Algorithms 40 (2012), 301–316},
    year = {2012}
}

@article{B98,
    author = {Banaszczyk, Wojciech},
    title = {Balancing vectors and gaussian measures of n-dimensional convex bodies},
    journal = {Random Structures and Algorithms 12 (1998), 351–360},
    year = {1998}
}

@inproceedings{jiang2024minference,
  title={MInference 1.0: Accelerating Pre-filling for Long-Context LLMs via Dynamic Sparse Attention},
  author={Jiang, Huiqiang and LI, YUCHENG and Zhang, Chengruidong and Wu, Qianhui and Luo, Xufang and Ahn, Surin and Han, Zhenhua and Abdi, Amir H and Li, Dongsheng and Lin, Chin-Yew and others},
  booktitle={The Thirty-eighth Annual Conference on Neural Information Processing Systems},
  year={2024}
}

@article{fu2024not,
  title={Not all heads matter: A head-level KV cache compression method with integrated retrieval and reasoning},
  author={Fu, Yu and Cai, Zefan and Asi, Abedelkadir and Xiong, Wayne and Dong, Yue and Xiao, Wen},
  journal={arXiv preprint arXiv:2410.19258},
  year={2024}
}

@article{kamradt2023needle,
  title={Needle in a haystack-pressure testing llms},
  author={Kamradt, Greg},
  journal={Github Repository},
  pages={28},
  year={2023}
}

@article{fu2024data,
  title={Data engineering for scaling language models to 128k context},
  author={Fu, Yao and Panda, Rameswar and Niu, Xinyao and Yue, Xiang and Hajishirzi, Hannaneh and Kim, Yoon and Peng, Hao},
  journal={arXiv preprint arXiv:2402.10171},
  year={2024}
}

@misc{qwen2.5,
    title = {Qwen2.5: A Party of Foundation Models},
    url = {https://qwenlm.github.io/blog/qwen2.5/},
    author = {Qwen Team},
    month = {September},
    year = {2024}
}

@article{qwen2,
      title={Qwen2 Technical Report}, 
      author={An Yang and Baosong Yang and Binyuan Hui and Bo Zheng and Bowen Yu and Chang Zhou and Chengpeng Li and Chengyuan Li and Dayiheng Liu and Fei Huang and Guanting Dong and Haoran Wei and Huan Lin and Jialong Tang and Jialin Wang and Jian Yang and Jianhong Tu and Jianwei Zhang and Jianxin Ma and Jin Xu and Jingren Zhou and Jinze Bai and Jinzheng He and Junyang Lin and Kai Dang and Keming Lu and Keqin Chen and Kexin Yang and Mei Li and Mingfeng Xue and Na Ni and Pei Zhang and Peng Wang and Ru Peng and Rui Men and Ruize Gao and Runji Lin and Shijie Wang and Shuai Bai and Sinan Tan and Tianhang Zhu and Tianhao Li and Tianyu Liu and Wenbin Ge and Xiaodong Deng and Xiaohuan Zhou and Xingzhang Ren and Xinyu Zhang and Xipin Wei and Xuancheng Ren and Yang Fan and Yang Yao and Yichang Zhang and Yu Wan and Yunfei Chu and Yuqiong Liu and Zeyu Cui and Zhenru Zhang and Zhihao Fan},
      journal={arXiv preprint arXiv:2407.10671},
      year={2024}
}
\bibliographystyle{plain}

\newpage
\appendix

\section{Full Proofs}

\subsection{Proof of \cref{thm:main-theorem}}\label{sec:appendix_full_proof_main_thm}
Finally equipped with \cref{thm:BALANCE-vectors} and \cref{thm:MergeAndReduce} we now state the proof of the main \cref{thm:main-theorem}.
\begin{proof}[Proof of \cref{thm:main-theorem}] 
Recall that \textsc{BalanceKV} approximates attention by finding good estimations for the numerator and the denominator of attention separately. In line~\eqref{line:erase_buckets}, we erase those terms from the numerator, whose value vectors have sufficiently small $\ell_2$ norms. In what follows, we:
\begin{itemize}
    \item Bound the space and time requirements of \textsc{BalanceKV}, as well as it's probability of failure. Each of the bounds readily follows from \cref{thm:MergeAndReduce}, 
    \item Bound the contribution of the erased terms to attention,
    \item Show that \textsc{BalanceKV} (or, more concretely, procedures $\textsc{MR-Numerator}_i$) approximate the rest of the terms of the numerator well,
    \item Show that \textsc{BalanceKV} (its subroutine $\textsc{MR-Denominator}$) approximates the denominator of attention well.
\end{itemize}
We begin by analyzing the time/space requirements and probability of success of \textsc{BalanceKV}. It never runs $\textsc{MR-Numerator}_i$ for $i > \log_2(v_{\text{max}})$ and $i < \log_2(\varepsilon\cdot n^{-1/2}\cdot v_{\text{max}})$, so it never keeps more than $\log_2(\sqrt{n}/\varepsilon) = O(\log(n))$ of them. \hyperref[alg:main]{\textsc{BalanceKV}} at any step $j$ performs one iteration of procedure $\textsc{MR-Denominator}$, one iteration of $\textsc{MR-Numerator}_i$ with $2^{i} \geq \|v_j\|_2 \geq 2^{i - 1}$, computes the subsets $K_1, \ldots, K_T, V_1, \ldots, V_T$ and computes a function of the selected points in the subsets in \hyperref[line:output]{line}. Therefore, the runtime of \hyperref[alg:main]{\textsc{BalanceKV}} during one iteration is bounded by the maximum of the runtime of \hyperref[alg:merge_reduce]{\textsc{MergeAndReduce}} during one iteration and time to compute the \hyperref[line:output]{output}. The latter is equal to its total memory. This maximum, by definition of $t$, is equal to $\widetilde{O}(d^2e^{4r^2/\sqrt{d}}/\varepsilon^2)$. The memory of the algorithm is the union of memory of $\textsc{MR-Numerator}_i$ for all $i$ and $\textsc{MR-Denominator}$, so the space complexity of the algorithm is $\widetilde{O}(d\sqrt{d}e^{2r^2/\sqrt{d}}/\varepsilon)$. The failure probability is bounded by union bounding the failure probabilities of all instances of \hyperref[alg:merge_reduce]{\textsc{MergeAndReduce}} and at most $n$ queries in the stream, and is equal to $1/\text{poly}(n)$.

Next, we bound the contribution of the erased terms. Formally, let $v_{\max}(j) \coloneqq \max_{i \leq j}\|v_i\|_2$ and  define $i(j) \coloneqq \max_{i}\left\{ 2^{i}\leq \frac{\varepsilon}{\sqrt{n}}v_{\text{max}}\right\}$. Observe that
\begin{equation}\label{eq:discarded}
    \begin{aligned}
        \left\|\frac{\sum_{\substack{i: i \leq j\\ \|v_i\|_2 \leq 2^{i(j)}}}\exp\left(\frac{\langle k_i, q_j\rangle}{\sqrt{d}}\right)v_i}{\sum_{i = 1}^j\exp\left(\frac{\langle k_i, q_j\rangle}{\sqrt{d}}\right)}\right\|_2 & \leq \frac{\sum_{\substack{i: i \leq j\\ \|v_i\|_2 \leq 2^{i(j)}}}\exp\left(\frac{\langle k_i, q_j\rangle}{\sqrt{d}}\right)\|v_i\|_2}{\sum_{i = 1}^j\exp\left(\frac{\langle k_i, q_j\rangle}{\sqrt{d}}\right)}  \qquad \text{by triangle inequality}\\
        &\leq \frac{\varepsilon}{2\sqrt{n}}\cdot \frac{\sum_{\substack{i: i \leq j\\\|v_i\|_2 \leq 2^{i(j)}}}\exp\left(\frac{\langle k_i, q_j\rangle}{\sqrt{d}}\right)}{\sum_{i = 1}^j\exp\left(\frac{\langle k_i, q_j\rangle}{\sqrt{d}}\right)} \quad \text{by the definition of $i(j)$} \\
        & \leq \frac{\varepsilon}{2\sqrt{n}}v_{\max} \\
        & \leq \varepsilon\cdot \left\|\text{softmax}\left(\frac{K_j\cdot q_j}{\sqrt{d}}\right)\right\|_2\cdot\|V_j\|_F,
    \end{aligned}
\end{equation}
where the last inequality follows from the general inequality $\left\|\text{softmax}\left(\frac{K_j\cdot q_j}{\sqrt{d}}\right)\right\|_2 \geq \frac{1}{\sqrt{j}} \geq \frac{1}{\sqrt{n}}$ and $\|V_j\|_F \geq v_{\max}$. 

We analyze the quality of approximation of the denominator together with the quality of approximation of the numerator, as both follow from \Cref{thm:MergeAndReduce}. At time step $j$, procedure $\textsc{MR-Denominator}$ returns subsets $K_1, \ldots, K_T$ such that
\begin{align}\label{eq:denom}
&\left|\sum_{i \in [j]}\exp\left(\frac{\langle k_i, q_j\rangle}{\sqrt{d}}\right) - \sum_{l = 0}^T 2^l\sum_{\{k, v\} \in K^l} \exp\left(\frac{\langle k, q_j\rangle}{\sqrt{d}}\right)\right| 
\leq \varepsilon \cdot j\cdot e^{\frac{-r^2}{\sqrt{d}}} \leq \varepsilon\cdot \sum_{i \in [j]}\exp\left(\frac{\langle k_i, q_j\rangle}{\sqrt{d}}\right)
\end{align} as follows from \cref{thm:MergeAndReduce} by plugging in scalars $v_1 = \ldots = v_j = 1$. Next, define $P_{i,j} = \{\{k_l, v_l\}: l\leq j, 2^{i} \geq \|v_l\|_2 \geq 2^{i - 1}\}$. Intuitively, $P_{i, j}$ aggregates all of the tokens proccessed by $\textsc{MR-Numerator}_i$  which appeated before time step $j$. $\textsc{MR-Numerator}_i$ returns subsets $V_i^1, \ldots, V_i^T \subseteq P_{i,j}$ for all $i$ such that,
\begin{align*}
 &\left\|\sum_{\{k, v\} \in P_{i, j}}
\exp\left(\frac{\langle k, q_j\rangle}{\sqrt{d}}\right)v -
\sum_{l = 0}^T 2^l\sum_{\{k, v\} \in V_i^l} \exp\left(\frac{\langle k, q_j\rangle}{\sqrt{d}}\right)v\right\|_2\\
&\leq\varepsilon |P_{i, j}|\cdot e^{-r^2/\sqrt{d}} \cdot 2^{i}\leq
\frac{\varepsilon|P_{i, j}|\cdot2^{i}}{\sqrt{j}}\cdot\left\|\exp\left(\frac{ K_j\cdot q_j}{\sqrt{d}}\right)\right\|_2, 
\end{align*} as follows from \cref{thm:MergeAndReduce} observing that $\max_{v: \{k, v\} \in P_{i, j}}\|v\|_2 \leq 2^i$. The last inequality holds because $\exp\left(\frac{\langle k, q\rangle}{\sqrt{d}}\right) \geq e^{-r^2/\sqrt{d}}$ and, therefore, $\left\|\exp\left(\frac{ K_j\cdot q_j}{\sqrt{d}}\right)\right\|_2 \geq \sqrt{j}\cdot e^{-r^2/\sqrt{d}}$.

Now, observe that $\|V_j\|_F \leq \sqrt{\sum_{i}|P_{i, j}|\cdot 2^{2i}}$ and $j = \sum_{i}|P_{i, j}|$. By the Cauchy-Schwartz inequality, 
\begin{equation}\label{eq:1}
\sum_{i}|P_{i, j}|\cdot 2^{i} \leq \sqrt{\sum_{i}|P_{i, j}|\cdot 2^{2i}} \cdot \sqrt{\sum_{i}|P_{i, j}|}.
\end{equation}
By triangle inequality, the sum of the outputs of procedures $\textsc{MR-Numerator}_i$ approximates the sum of the terms in the numerator of attention that were not erased in line~\eqref{line:erase_buckets} up to an additive error
\begin{align}\label{eq:numerator}
&\frac{\varepsilon}{\sqrt{j}}\cdot \left\|\exp\left(\frac{ K_j\cdot q_j}{\sqrt{d}}\right)\right\|_2\sum_{i}|P_{i, j}|\cdot 2^{i} \leq \varepsilon\cdot\left\|\exp\left(\frac{ K_j\cdot q_j}{\sqrt{d}}\right)\right\|_2\|V_j\|_F,
\end{align}
where the last inequality follows from \cref{eq:1}. 

It remains to show how the statement of the theorem follows from the derived bounds. Consider the following abstract derivation. Let $u$ and $u'$ be vectors such that $\|u - u'\|_2 \leq \alpha$, and let $b$ and $b'$ be positive numbers such that
\[\frac{1}{1 + \gamma}\cdot \frac{1}{b} \leq \frac{1}{b'} \leq \frac{1}{1 - \gamma}\cdot \frac{1}{b}.\]
Then, by application of triangle inequalities,
\begin{equation}
\begin{aligned}\left\|\frac{u}{b} - \frac{u'}{b'}\right\|_2 &\leq \frac{1}{b'}\cdot\left\|u - u'\right\|_2 + \|u\|_2\cdot\left|\frac{1}{b} - \frac{1}{b'}\right| \\
& \leq \frac{\alpha}{1 - \gamma}\cdot \frac{1}{b} + \|u\|_2\cdot \left(\frac{1}{1 - \gamma} - 1\right)\cdot\frac{1}{b} = \\
& = \frac{\alpha}{1 - \gamma}\cdot \frac{1}{b} + \|u\|_2\cdot \frac{\gamma}{1 - \gamma}\cdot\frac{1}{b}.
\end{aligned}
\end{equation}
From \Cref{eq:denom},
\begin{equation}\label{eq:inverted_denom}
    \frac{1}{1 + \varepsilon}\cdot \frac{1}{\sum_{i \in [j]}\exp\left(\frac{\langle k_i, q_j\rangle}{\sqrt{d}}\right)}\leq \frac{1}{\sum_{l = 0}^T 2^l\sum_{\{k, v\} \in V_i^l} \exp\left(\frac{\langle k, q_j\rangle}{\sqrt{d}}\right)} \leq \frac{1}{1 - \varepsilon}\cdot \frac{1}{\sum_{i \in [j]}\exp\left(\frac{\langle k_i, q_j\rangle}{\sqrt{d}}\right)}.  
\end{equation}
For simplicity of notation, let $D \subseteq [j]$ denote the subset of indices of all tokens discarded in line~\eqref{line:erase_buckets}. Take $\gamma = \varepsilon, b = \sum_{i \in [j]}\exp\left(\frac{\langle k_i, q_j\rangle}{\sqrt{d}}\right)$, $b' = \sum_{l = 0}^T 2^l\sum_{\{k, v\} \in V_i^l} \exp\left(\frac{\langle k, q_j\rangle}{\sqrt{d}}\right)$, $u = \sum_{i \in [j]\setminus D}\exp\left(\frac{\langle k_i, q_j\rangle}{\sqrt{d}}\right)v_i$, $u' = \sum_{l = 0}^T 2^l\sum_{\{k, v\} \in V_i^l} \exp\left(\frac{\langle k, q_j\rangle}{\sqrt{d}}\right)v$ and, finally, $\alpha = \varepsilon\cdot\left\|\exp\left(\frac{ K_j\cdot q_j}{\sqrt{d}}\right)\right\|_2\|V_j\|_F$. The first of the preconditions of our derivation holds by \Cref{eq:numerator}. The second one holds by \Cref{eq:inverted_denom}. Hence,
\begin{align*}
    \left\|\frac{\sum_{l = 0}^T 2^l\sum_{\{k, v\} \in V^l}\exp\left(\frac{\langle k, q_j\rangle}{\sqrt{d}}\right)v}{\sum_{l = 0}^T 2^l\sum_{\{k, v\} \in K^l} \exp\left(\frac{\langle k, q_j\rangle}{\sqrt{d}}\right)} - \frac{\sum_{i \in [j]\setminus D}\exp\left(\frac{\langle k_i, q_j\rangle}{\sqrt{d}}\right)v_i}{\sum_{i \in [j]}\exp\left(\frac{\langle k_i, q_j\rangle}{\sqrt{d}}\right)}\right\|_2 \\
    \leq \frac{2\varepsilon}{1 - \varepsilon}\cdot \left\|\text{softmax}\left(\frac{K_j\cdot q_j}{\sqrt{d}}\right)\right\|_2\cdot\|V_j\|_F,
\end{align*}
since $\|u\|_2 = \left\|\sum_{i \in [j]\setminus D}\exp\left(\frac{\langle k_i, q_j\rangle}{\sqrt{d}}\right)v_i\right\|_2 \leq \left\|\exp\left(\frac{ K_j\cdot q_j}{\sqrt{d}}\right)\right\|_2\|V_j\|_F$.

Finally, combining the above with \Cref{eq:discarded} via triangle inequality, we get
\begin{align*}
&\norm{\frac{\displaystyle \sum_{l = 0}^T 2^l\sum_{\{k, v\} \in V^l}\exp\left(\frac{\langle k, q_j\rangle}{\sqrt{d}}\right)v}{\displaystyle \sum_{l = 0}^T 2^l\sum_{\{k, v\} \in K^l} \exp\left(\frac{\langle k, q_j\rangle}{\sqrt{d}}\right)} - \text{Attn}(q_j, K_j, V_j)}_2 \leq \frac{2\varepsilon}{1 - \varepsilon}\left\|\text{softmax}\left(\frac{K_j\cdot q_j}{\sqrt{d}}\right)\right\|_2\norm{V_j}_F.
\end{align*}
By rescaling $\varepsilon \to \varepsilon/4$, we get the desired approximation \cref{eq:objective}.
\end{proof}

\subsection{Theoretical Guarantees of \hyperref[alg:BALANCE-V]{\textsc{SoftmaxBalance}}}\label{sec:appendix_softmax_balance}
Algorithm Self-Balancing Walk introduced in \cite{ALS21} receives as input vectors $u_1, \ldots, u_n$ and selects signs for them so that, for any direction, the signed sum of the vectors is balanced along that direction with high probability. The following theorem readily follows from theorem 1.1 in \cite{ALS21}:

\begin{theorem}[Theorem 1.1 in \cite{ALS21}]\label{thm:BALANCE-ALS21}
 For any $n, d \in \mathbb{N}$, there exists a randomized algorithm which receives as input a set of vectors $U=\left\{u_1, \ldots, u_n\right\} \in \mathbb{R}^d$ and a parameter $\delta>0$. The algorithm outputs a (random) subset $U'\subset U$ such that, for any vector $u \in \mathbb{R}^d$, with probability at least $1-\delta$,
\begin{equation*}
\left|\sum_{i \in U'}\left\langle u_i, u\right\rangle-\sum_{i \notin U'}\left\langle u_i, u\right\rangle\right| \leq O\left(\log (n / \delta) \cdot\max _{i \in[n]}\left\|u_i\right\|_2\cdot\|u\|_2\right).
\end{equation*}
\end{theorem}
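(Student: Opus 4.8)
The plan is to obtain \cref{thm:BALANCE-ALS21} from the self-balancing walk of \cite{ALS21} by a rescaling argument. Recall the object: the self-balancing walk processes vectors $w_1, w_2, \dots$ online with $\|w_t\|_2 \le 1$, maintains a running signed sum $s_0 = 0$ and $s_t = s_{t-1} + \chi_t w_t$, and sets $\chi_t \in \{-1,+1\}$ to $+1$ with probability $\tfrac12 - \tfrac{\langle s_{t-1}, w_t\rangle}{2c}$ for $c = \Theta(\log(n/\delta))$, aborting only when $|\langle s_{t-1}, w_t\rangle| > c$. In the form we need, Theorem~1.1 of \cite{ALS21} guarantees that for any test vector $\theta$ fixed in advance --- the stream $w_1, w_2, \dots$ may even be generated adaptively --- with probability at least $1-\delta$ one has $|\langle s_t, \theta\rangle| \le O(\log(n/\delta))\,\|\theta\|_2$ simultaneously for all $t$, the abort event being folded into this failure probability.

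Granting this, the derivation is bookkeeping. Put $M := \max_{i\in[n]}\|u_i\|_2$; the statement is vacuous if $M = 0$, so assume $M > 0$, set $\tilde u_i := u_i/M$ so that $\|\tilde u_i\|_2 \le 1$, run the self-balancing walk on $\tilde u_1, \dots, \tilde u_n$ to obtain signs $\chi_1, \dots, \chi_n$, and output $U' := \{\, u_i : \chi_i = +1 \,\}$, a random subset of $U$ exactly as the statement asks. For any $u \in \mathbb{R}^d$ we have the identity
\begin{equation*}
\sum_{i \in U'}\langle u_i, u\rangle - \sum_{i \notin U'}\langle u_i, u\rangle \;=\; \Big\langle \sum_{i\in[n]}\chi_i u_i,\; u\Big\rangle \;=\; M\,\langle s_n, u\rangle ,
\end{equation*}
so applying the guarantee above with $\theta = u$ and failure probability $\delta$ gives $|\langle s_n, u\rangle| \le O(\log(n/\delta))\|u\|_2$ with probability at least $1-\delta$, and multiplying by $M$ yields the claimed bound $\big|\sum_{i\in U'}\langle u_i, u\rangle - \sum_{i\notin U'}\langle u_i, u\rangle\big| \le O\!\left(\log(n/\delta)\cdot \max_{i\in[n]}\|u_i\|_2 \cdot \|u\|_2\right)$.

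The one thing to be careful about --- and the reason this takes a short argument rather than being a tautology --- is to invoke \cite{ALS21} in exactly the single-test-vector form above: the bound must hold for one arbitrary direction $u$ at a time with failure probability $\delta$ (not merely as the $\ell_\infty$ online-Koml\'os corollary in which their Theorem~1.1 is sometimes quoted), and, for the later use inside \hyperref[alg:merge_reduce]{\textsc{MergeAndReduce}} where the walk is applied to batches that are themselves random outputs of earlier walks, it is convenient that the guarantee tolerates adaptively generated streams. One should also double-check that the constant hidden in $c$ (taken as $30\log(n/\delta)$ in \cref{alg:BALANCE-V}) is one for which the potential-function argument of \cite{ALS21} actually closes, so that the abort branch has probability at most $\delta$. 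Everything further --- the $\sqrt{s}$ and additional logarithmic factors, and the normalization $R = \exp(r_{\mathrm{key}}^2/2\sqrt d)\, r_{\mathrm{value}}$ appearing in \cref{thm:BALANCE-vectors} --- belongs to building \hyperref[alg:BALANCE-V]{\textsc{SoftmaxBalance}} on top of this statement, not to the statement itself.
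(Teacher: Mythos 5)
The paper itself does not prove this statement; it is quoted directly as a consequence of Theorem~1.1 of \cite{ALS21} (``readily follows from Theorem~1.1 in \cite{ALS21}''), so there is no proof in the paper to compare against. Your derivation is the natural way to fill in that gap: normalize by $M = \max_i\|u_i\|_2$, run the self-balancing walk on $\tilde u_i = u_i/M$, use the identity $\sum_{i\in U'}\langle u_i,u\rangle - \sum_{i\notin U'}\langle u_i,u\rangle = M\langle s_n,u\rangle$, and apply the single-test-direction tail bound of \cite{ALS21}. Your two caveats are also the right ones to flag: (i) one must invoke the per-direction form of the ALS21 guarantee (their Theorem~1.1 is stated as the $\ell_\infty$/online-Koml\'os corollary, but their analysis gives the per-direction tail with failure probability $\delta$, which is what produces $\log(n/\delta)$ rather than $\log(nd/\delta)$), and (ii) the constant in $c$ must match what makes the potential argument close and keeps the abort probability within $\delta$. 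This is correct and is exactly what the paper leaves implicit.
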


\begin{algorithm}\caption{$\text{Self-Balancing Walk}\left((u_j)_j, r, \delta\right)$}\label{alg:BALANCE}
\begin{algorithmic}[1]
\STATE {\bfseries input:} stream of $\leq n$ vectors $u_j$, radius $r$: $\max_{j}\|u_j\|_2 \leq r$, probability of failure $\delta$.
\STATE $c \leftarrow 30 \log (n/ \delta)$
\STATE $U_{-},U_{+} \leftarrow \emptyset$
\FOR{$i$ from 1 and until the end of the stream}
\IF{$\left|\sum_{u \in U_+}\langle u, u_i\rangle - \sum_{u \in U_-}\langle u, u_i\rangle\right|>c\cdot r^2$}
    \STATE Fail
    \ENDIF
\STATE $p_i \leftarrow \frac{1}{2}-\frac{\sum_{u \in U_+}\langle u, u_i\rangle - \sum_{u \in U_-}\langle u, u_i\rangle}{2 c \cdot r^2}$
\STATE $\varepsilon_i \leftarrow +$ with probability $p_i$, and $\varepsilon_i \leftarrow-$ with probability $1-p_i$
\STATE $U_{\varepsilon_i} \leftarrow U_{\varepsilon_i}\cup \{k_i\}$
\ENDFOR
\IF{$|U_{+}| \leq |U_{-}|$}
\STATE{\bfseries output:} $U_{+}$
\ELSE
\STATE{\bfseries output:} $U_{-}$
\ENDIF
\end{algorithmic}
\end{algorithm}

\begin{proof}[Proof of \cref{thm:BALANCE-vectors}]
Define for any $k \in \mathbb{R}^d$ an embedding function $\varphi(k):$
\begin{equation*}
\varphi(k)=\left( \frac{(k/d^{0.25})^{\otimes  i}}{\sqrt{i!}} \right)_{i\geq 0}.
\end{equation*}
 It is easy to see that for any two vectors $k, q \in \mathbb{R}^d$
\begin{equation*}
\langle \varphi(k), \varphi(q) \rangle = \exp\left(\frac{\langle k, q\rangle}{\sqrt{d}}\right), 
\end{equation*}
and for any $k \in \mathbb{R}^d$ \begin{equation*}
\|\varphi(k)\|^2_2 = \exp\left(\frac{\|k\|^2_2}{\sqrt{d}}\right).
\end{equation*}

Consider the set of vectors $\varphi(k_1) \otimes v_1, \ldots, \varphi(k_n) \otimes v_n$. Run the Self-Balancing Walk algorithm on the set of vectors $\varphi(k_1) \otimes v_1, \ldots, \varphi(k_n) \otimes v_n$ with failure parameter set to $\delta/s$ and denote by $C'$ and $C\backslash C'$ the partition of $C$ returned by the algorithm. Observe that, even though vectors $\varphi(k_i) \otimes v_i$ are infinite dimensional, Self-Balancing Walk still can be implemented. The algorithm never has to keep these vectors in the memory because the only operation which requires the knowledge of the embeddings ~-- the inner product ~-- can be performed if we just store vector pairs $\{k_i, v_i\}$:

$$\langle \varphi(k_i) \otimes v_i, \varphi(k_j) \otimes v_j\rangle = \exp\left(\frac{\langle k_i, k_j\rangle}{\sqrt{d}}\right)\cdot\langle v_i, v_j\rangle.$$ 

Denote by $e_1, \ldots, e_s$ the standard orthonormal basis in $\mathbb{R}^s$. By \cref{thm:BALANCE-ALS21}, for any $i \in [s]$ with probability $1 - \delta/s$
\begin{equation}\label{eq:i-th coordinate}
    \begin{aligned}
    \left|\sum_{\{k,v\} \in C'}\langle \varphi(k)\otimes v, \varphi(q)\otimes e_i\rangle-\sum_{\{k, v\} \notin C'}\langle \varphi(k)\otimes v, \varphi(q)\otimes e_i\rangle\right| \\\leq O\left(\log (ns/\delta)\cdot\max_{\{k, v\} \in C}\left\|\varphi(k)\otimes v\right\|_2\cdot\|\varphi(q)\otimes e_i\|_2\right), 
\end{aligned}
\end{equation}
and so with probability at least $1 - \delta$ all of the above inequalities hold simultaneously. To simplify the right hand side, notice that $\left\|\varphi(k)\otimes v\right\|_2 =  \exp\left(\frac{\|k\|^2_2}{2\sqrt{d}}\right)\cdot\|v\|_2$ and $\|\varphi(q)\otimes e_i\|_2 = \exp\left(\frac{\|q\|^2_2}{2\sqrt{d}}\right)$. Observe that for any $i$ $\langle \varphi(k)\otimes v, \varphi(q)\otimes e_i\rangle = \langle\varphi(k), \varphi(q)\rangle \cdot [v]_i = \exp\left(\frac{\langle k, q\rangle}{\sqrt{d}}\right)\cdot[v]_i$ , where by $[v]_i$ we denote the $i$-th coordinate of the vector $v$. Therefore, the left hand side of the expression above is simply the absolute value of the $i$-th coordinate of the vector 
\begin{equation*}
\sum_{\{k, v\} \in C'}\exp\left(\frac{\langle k, q\rangle}{\sqrt{d}}\right)v  - \sum_{\{k, v\} \notin C'}\exp\left(\frac{\langle k, q\rangle}{\sqrt{d}}\right)v.
\end{equation*}
Thus, \cref{eq:i-th coordinate} provides a uniform upper bound on the absolute values of coordinates of the above vector.
Since the $l_{\infty}$ norm of a vector is the maximum of the absolute values of its coordinates, 

\begin{equation*}
    \begin{aligned}
        &\left\|\sum_{\{k, v\} \in C'}\exp\left(\frac{\langle k, q\rangle}{\sqrt{d}}\right)v  - \sum_{\{k, v\} \notin C'}\exp\left(\frac{\langle k, q\rangle}{\sqrt{d}}\right)v\right\|_{\infty}\\ 
        &= \max_{i \in [s]}\left|\sum_{\{k,v\} \in C'}\langle \varphi(k)\otimes v, \varphi(q)\otimes e_i\rangle-\sum_{\{k, v\} \notin C'}\langle \varphi(k)\otimes v, \varphi(q)\otimes e_i\rangle\right| \\
        & \leq O\left(\log (ns/ \delta) \cdot\max_{\{k, v\} \in C}\left(\exp\left(\frac{\|k\|^2_2}{2\sqrt{d}}\right)\cdot\|v\|_2\right)\cdot \exp\left(\frac{\|q\|^2_2}{2\sqrt{d}}\right)\right).
    \end{aligned}
\end{equation*}
Finally, we go from bounding the $l_{\infty}$ norm a vector to bounding its $l_2$ norm:
\begin{align*}  
&\left\|\sum_{\{k, v\} \in C'}\exp\left(\frac{\langle k, q\rangle}{\sqrt{d}}\right)v  - \sum_{\{k, v\} \notin C'}\exp\left(\frac{\langle k, q\rangle}{\sqrt{d}}\right)v\right\|_{2}\\& \leq \sqrt{s}\left\|\sum_{\{k, v\} \in C'}\exp\left(\frac{\langle k, q\rangle}{\sqrt{d}}\right)v  - \sum_{\{k, v\} \notin C'}\exp\left(\frac{\langle k, q\rangle}{\sqrt{d}}\right)v\right\|_{\infty} \\
&\leq O\left(\sqrt{s}\cdot\log (ns/ \delta) \cdot\max_{\{k, v\} \in C}\left(\exp\left(\frac{\|k\|^2_2}{2\sqrt{d}}\right)\cdot\|v\|_2\right)\cdot \exp\left(\frac{\|q\|^2_2}{2\sqrt{d}}\right)\right).
\end{align*}
\end{proof}

\subsection{\textsc{MergeAndReduce}}

\subsubsection{Pseudocode for \textsc{MergeAndReduce}}\label{sec:appendix_pseudocodes}
The pseudocode for \textsc{MergeAndReduce} is presented in \ref{alg:merge_reduce}.

\begin{algorithm}
\begin{algorithmic}[1]
\caption{\textsc{MergeAndReduce}$((k_j, v_j)_j, t, T, \varepsilon)$}\label{alg:merge_reduce}
    \STATE{\bfseries input:} stream of $\leq n$ tokens $(k_j, v_j)$, batch size $t$, compression rate $2^{-T}$, precision parameter $\varepsilon$.
   
    \STATE Let \hyperref[alg:BALANCE-V]{\textsc{SoftmaxBalance}} be the algorithm as per \cref{thm:BALANCE-vectors}.
    \STATE Initialize $i$-th level subset $C^i, i=0,\ldots, T,$ to empty
    \REPEAT 
    \STATE $C^0 \leftarrow C^0 \cup \{\{k_j, v_j\}\}$   
    \IF{$p$ is not a multiple of $t$ then}
    \STATE \textbf{output} $C^0, \ldots, C^T$ 
    \STATE \textbf{continue}
    \ENDIF
    \vspace{2mm}
    \STATE /*Update subsets every $t$ steps*/
    \STATE $p\gets j/t$, $i\gets 0$ 
    \WHILE{$p$\text{~is an integer~}and \textbf{until} $i=T$}\label{line:while} 
    \STATE $C^{i+1} \leftarrow C^{i+1}\cup\text{\hyperref[alg:BALANCE-V]{\textsc{SoftmaxBalance}}}(C^i, r_{\text{key}}, r_{\text{value}}, 1/\text{poly}(n))$
    \STATE $C^i \leftarrow \emptyset$
    \STATE $i \gets i+1$
    \STATE $p\gets  p/2$
    \ENDWHILE
    \STATE \textbf{output} $C^0, \ldots, C^T$ 
    \UNTIL{token stream ends}

\end{algorithmic}
\end{algorithm}

\begin{figure}[t]
\centering
\includegraphics[width=1\textwidth]{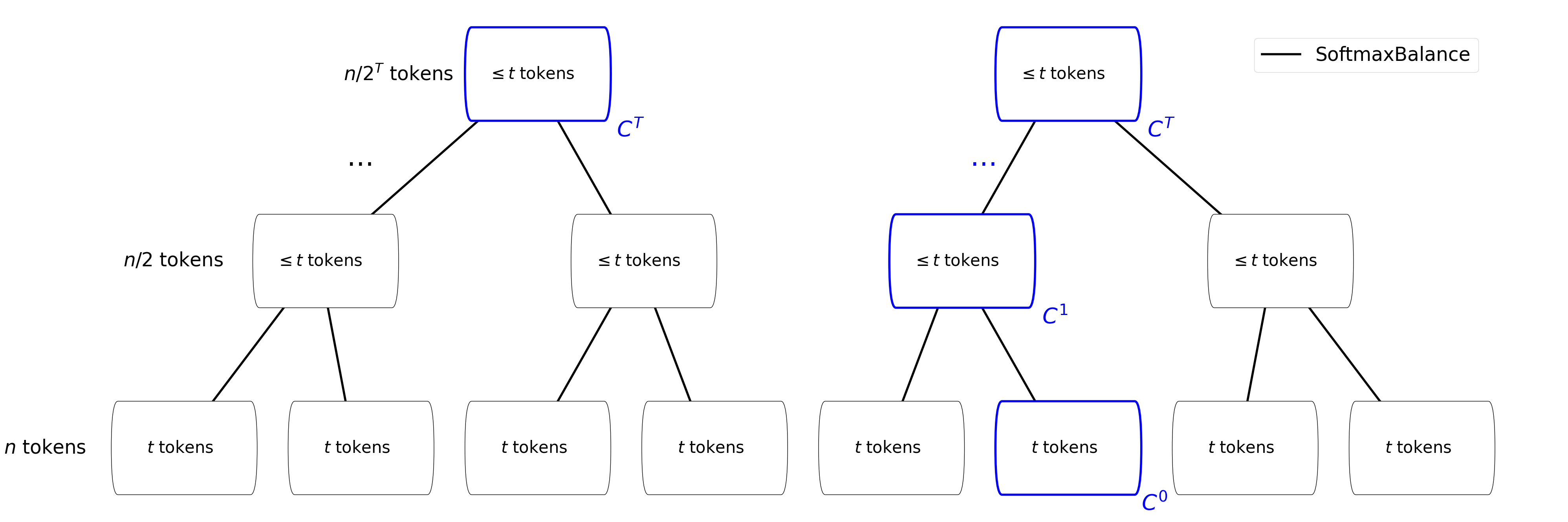}
    \caption{Illustration of the tree structure of \textsc{MergeAndReduce}}\label{fig:merge_reduce}
\end{figure}
\subsubsection{Theoretical Guarantees of \hyperref[alg:merge_reduce]{\textsc{MergeAndReduce}}}\label{sec:appendix_proof_merge_reduce}
We now present the full proof of \cref{thm:MergeAndReduce}.

\MergeAndReduce*

\begin{proof} Let us first consider the performance of the procedure at time steps which are multiples of $t$. Note that since in the statement of the theorem $T = \log_2(n/t)$, condition \textbf{until} in \hyperref[line:while]{line \textbf{while}} is redundant. Observe that at any such $j$-th step the procedure is an online implementation of the following simple offline recursive algorithm on dataset $\{\{k_1, v_1\}, \ldots, \{k_j, v_j\}\}$: 

\begin{enumerate}
    \item  Set $p = j/t$ and $i = 1$. Split the dataset $\{\{k_1, v_1\}, \ldots, \{k_j, v_j\}\}$ into batches $B^0_1, \ldots, B^0_p$ of size $t$. 
    \item While $p$ is an integer:
    \begin{itemize}
        \item Run \hyperref[alg:BALANCE-V]{\textsc{SoftmaxBalance}} on the batches $B^{i - 1}_1, \ldots, B^{i -1}_p$ independently
        \item If $p$ is odd, store the output of \hyperref[alg:BALANCE-V]{\textsc{SoftmaxBalance}} on $B^{i - 1}_p$ in $C^i$
        \item For every $l$, merge the outputs of \hyperref[alg:BALANCE-V]{\textsc{SoftmaxBalance}} on $B^{i - 1}_{2l - 1}$ and $B^{i - 1}_{2l}$ into one batch and store them in $B^i_{l}$,
        \item Update $p \leftarrow \lfloor p/2\rfloor$, $i \leftarrow i+1$. Stop when $p = 1$.
    \end{itemize} 
\end{enumerate} 

Therefore, we will analyze space complexity and performance guarantees of the above offline algorithm.

\textbf{Probability of success}. 

Note that our algorithm performs correctly if each of the calls to \hyperref[alg:BALANCE-V]{\textsc{SoftmaxBalance}} produces small error on each of the queries $q$ (as in theorem \cref{thm:BALANCE-vectors}). Throughout the stream, we make $O(n/t)$ calls to \hyperref[alg:BALANCE-V]{\textsc{SoftmaxBalance}}, and we apply each to at most $n$ queries, so, it is enough to require that that all \hyperref[alg:BALANCE-V]{\textsc{SoftmaxBalance}} have failure probability parameter $\delta = 1/\text{poly}(n)$.

\textbf{Space complexity.} 

Observe that after each iteration of step 2 the number of batches decreases by a factor of two. The maximum batch size is always bounded by $t$. This is because a batch $B_l$ at iteration $i$ of step 2 is a union of \hyperref[alg:BALANCE-V]{\textsc{SoftmaxBalance}}$(B_{2l -1})$ and \hyperref[alg:BALANCE-V]{\textsc{SoftmaxBalance}}$(B_{2l})$ for $B_{2l - 1}$ and $B_{2l}$ at iteration $i - 1$ of step 2, and \hyperref[alg:BALANCE-V]{\textsc{SoftmaxBalance}} reduced the size of the dataset which it has been applied to at least by a factor of 2.

The memory of the procedure is the collection of memory cells $C^i$, and $|C^i| \leq t$. Since at time step $j$ at most $\log_2(p) \leq \log_2(n/t)$ memory cells are occupied, the total memory is bounded by $O(dt\log_2(n/t)) = O(dtT),$ which, using the $\widetilde{O}$ notation, is equal to $\widetilde{O}(d\sqrt{d}e^{2r^2/\sqrt{d}}/{\varepsilon})$.

\textbf{Performance of the algorithm.}

Define $B^i = \cup_l B^i_l$~-- the data points which remained in the batches after $i$ iterations of step 2. By triangle inequality,

\begin{align*}
    &\left\|\sum_{i = 1}^{T} 2^i\sum_{\{k, v\} \in C^i}\exp \left(\frac{\langle k, q_j\rangle}{\sqrt{d}}\right)v- \sum_{i = 1}^j\right.\left.\exp\left(\frac{\langle k_i, q_j\rangle}{\sqrt{d}}\right)v_i \right\|_2\\
     &\leq \sum_{i = 0}^{T-1}\left\|2^{i+1}\sum_{\{k, v\} \in B^{i+1}\cup C^{i+1}}\right.\left.\exp\left(\frac{\langle k, q_j\rangle}{\sqrt{d}}\right)v - 2^i\sum_{\{k, v\} \in B^i}  \exp\left(\frac{\langle k, q_j\rangle}{\sqrt{d}}\right)v\right\|_2\\
    &\leq \sum_{i = 0}^{T-1}2^i\left\|\sum_{\{k, v\} \in B^{i+1}\cup C^{i+1}}\right.\left.\exp\left(\frac{\langle k, q_j\rangle}{\sqrt{d}}\right)v-\sum_{\{k, v\} \in B^{i}\backslash(B^{i+1}\cup C^{i+1})}\exp\left(\frac{\langle k, q_j\rangle}{\sqrt{d}}\right)v\right\|_2.
\end{align*}

We will refer to the $i$-th summand (starting from 0) on the right hand side as the error produced by the $i+1$-st iteration of step 2. At the $i+1$-st iteration of step 2 we apply \hyperref[alg:BALANCE-V]{\textsc{SoftmaxBalance}} to $p/2^{i}$ batches $B^i_1, B^i_2, \ldots$ of size $t$, and we save the outputs of \hyperref[alg:BALANCE-V]{\textsc{SoftmaxBalance}} in batches $C^{i+1}, B^{i+1}_1, B^{i+1}_2, \ldots$. Therefore, by \cref{thm:BALANCE-vectors} and triangle inequality, the error vector produced by the procedure at the $i+1$-st iteration of step 2 has $l_2$ norm bounded by 

\begin{equation*}
    \begin{aligned}
        O\left(2^{i}\cdot\sqrt{s}\cdot\log(sn)
        \cdot\left(\frac{p}{2^{i}}\right)\cdot \ e^{r^2/\sqrt{d}}\max_{j \in [n]}\|v_j\|_2\right)=
        O\left(\sqrt{s}\cdot\log(sn)\cdot \ p\cdot e^{r^2/\sqrt{d}}\max_{j \in [n]}\|v_j\|_2\right),
    \end{aligned}
\end{equation*} since the error parameter $\delta$ of all instances of \hyperref[alg:BALANCE-V]{\textsc{SoftmaxBalance}} is set to $1/\text{poly}(n)$. The $l_2$ norm of the total error of our procedure is bounded by

\begin{equation*}
    \begin{aligned}
        O\left(\sqrt{s}\cdot\log(sn)\cdot T\cdot p\cdot e^{r^2/\sqrt{d}}\max_{j \in [n]}\|v_j\|_2\right). 
    \end{aligned}
\end{equation*}
By definition, $p = j/t$. In order to ensure that the statement of the theorem is correct, the upper bound on the $l_2$ norm of the error vector of the procedure should be less than the desired error $e^{-r^2/\sqrt{d}}\cdot\max_{i \in [n]}\|v_i\|_2$:

\begin{equation*}
    \begin{aligned}
        O\left(\sqrt{s}\cdot\log(sn)\cdot T\cdot \frac{j}{t}\cdot e^{r^2/\sqrt{d}}\max_{j \in [n]}\|v_j\|_2\right) \leq 
        \varepsilon j\cdot e^{-r^2/\sqrt{d}}\cdot\max_{j \in [n]}\|v_j\|_2.
    \end{aligned}
\end{equation*}

And, since by definition

$$t = O\left(\frac{\log^{2}(sn)\cdot\sqrt{s}\cdot e^{2r^2/\sqrt{d}}}{\varepsilon}\right),$$

the above inequality holds.

\textbf{Runtime during one time step.} At worst, during $j$-th time step the algorithm has to launch \hyperref[alg:BALANCE-V]{\textsc{SoftmaxBalance}} $\log_2(p) \leq \log_2(n/t) = T$ times on batches of size $t$, so the runtime is bounded by $O(dt^2T)$. In the $\widetilde{O}$ notation, the runtime is equal to $\widetilde{O}(d^2e^{4r^2/\sqrt{d}}/\varepsilon^2).$

As the final step, we will analyze the performance of the procedure at time steps $j'$ which are not multiples of $t$. Define $j_t = \lfloor j'/t\rfloor\cdot t$. Note that at any such time step the procedure simply saves the triplet $(q_{j'}, k_{j'}, v_{j'})$ and outputs the sum of the approximation $z_{j_t}$ such that

$$\left\|\sum_{i = 1}^{j_t}\exp\left(\frac{\langle k_i, q_j'\rangle}{\sqrt{d}}\right)v_i - z_{j_t}\right\|_2 \leq \varepsilon j_t\cdot e^{-r^2/\sqrt{d}}\cdot\max_{i \in [n]}\|v_i\|_2,$$

and $\sum_{i = j_t+1}^{j'}\exp\left(\frac{\langle k_i, q_j'\rangle}{\sqrt{d}}\right)v_i$. From the above inequality, 

\begin{equation*}
    \begin{aligned}
        \left\|\sum_{i = 1}^{j'}\exp\left(\frac{\langle k_i, q_j'\rangle}{\sqrt{d}}\right)v_i - \left(z_{j_t}+\sum_{i = j_t+1}^{j'}\exp\left(\frac{\langle k_i, q_j'\rangle}{\sqrt{d}}\right)v_i\right)\right\|_2 \leq \varepsilon j_t\cdot e^{-r^2/\sqrt{d}}\cdot\max_{i \in [n]}\|v_i\|_2,
    \end{aligned}
\end{equation*}
as desired.
\end{proof}

\section{Full Experimental Details} \label{sec:exp-details}

Experiments in \cref{sec:single-layer} and \cref{sec:niah} are performed on a single NVIDIA A100 GPU with 80GB VRAM, and the rest on a single NVIDIA RTX A6000 GPU with 48GB VRAM. 

\paragraph{Implementation Detail.} To enhance the practical performance of our algorithm, we implement \hyperref[alg:main]{\textsc{BalanceKV}} with parallel operations. Specifically, we consider the cache embeddings of length $n$ and dimension $d$ as a sequence of blocks with length $b$ and reshape them into a tensor of shape $b \times (n/b) \times d$ . Then, \hyperref[alg:main]{\textsc{BalanceKV}} is applied in parallel to all blocks of length $b$. For cases where $n$ is not divisible by $b$, we pad the embeddings with zeros. After sign assignment to all embeddings in each block (i.e., line 9 in \cref{alg:BALANCE-V}), it is reshaped to its original length, and we strictly select $n/2$ embeddings, repeating this process for $T$ iterations.

\subsection{Ablation Studies on Single Layer Attention Approximation}\label{sec:appendix_single_layer}

In this section we re-state with full details the single layer attention approximation experiments presented in Section \ref{sec:single-layer}.

We empirically evaluate the performance of \hyperref[alg:main]{\textsc{BalanceKV}} for approximating a single attention layer, and compare it with independent uniform sampling. 
We use the pretrained \llama{}~\cite{dubey2024llama} and \mistral~\cite{mistral} and TriviaQA dataset from LongBench~\cite{bai2023longbench},
and consider the $1^{st}$,$2^{nd}$ and $5^{th}$ layers of the models for attention approximation. 

For given a prompt with length $n$, we store the corresponding query, key, and value embeddings for all layers. Denote a pair of embeddings in some layer by $(q_1,k_1,v_1),\ldots,(q_n,k_n,v_n)$ and the goal is to approximate the attention $\text{Attn}(q_j, K_j, V_j)$ for the latest $256$ queries, i.e. $j\in [n-256,n]$. Specifically, we keep several first and recent tokens separately and apply \hyperref[alg:main]{\textsc{BalanceKV}} to the intermediate row vectors in $K_j$. This is motivated by StreamingLLM~\cite{xiao2023efficient} as important contexts are likely contained in the first and latest tokens.  
We retain the first 256 embeddings and the recent ones from $n-256$ to $j$ and our compressed cache contains tokens whose indices are in $[256] \cup S \cup \{n-256, \dots, j\}$ where $S \subseteq [257, n-256]$ can be obtained from \hyperref[alg:main]{\textsc{BalanceKV}}.
We explore four compression parameters $T \in \{1,2,3,4\}$ which reduces the cache memory by a factor of $2^{-T}$. 
Let $z_j$ be our approximation using \hyperref[alg:main]{\textsc{BalanceKV}} plus the recent and first few embeddings at the stream $j\in [n-256,n]$.
We compute relative errors $\|z_j-\text{Attn}(q_j, K_j, V_j)\|_F/\|\text{Attn}(q_j, K_j, V_j)\|_F$ for all $j\in [n-256,n]$, batches, heads and input prompts in the dataset. We repeat this with 10 different random seeds and compute their average and standard deviations. 
We also compare our method to independent uniform sampling, in which we replace the application of \hyperref[alg:main]{\textsc{BalanceKV}} with  sampling a $2^{-T}$  fraction of key and value embeddings with indices in $[257,n-256]$ uniformly at random.
The results are reported in Figure \ref{fig:single_layer_triviaqa}. 

Next we present the results of the ablation studies described in Section \ref{sec:single-layer} which demonstrate how batch size and compression rate affect the relative error in attention approximation for layers 1 and 15 for \llama{}.

\begin{figure}[h]
\centering

\subfigure[Layer 1 Runtime (s)]{
\begin{minipage}{0.45\textwidth}
\centering
\begin{tabular}{lccc}
\toprule
\textbf{Batch Size} & \textbf{1/2} & \textbf{1/4} & \textbf{1/8} \\
\midrule
256 & 0.0603 & 0.1190 & 0.1793 \\
128 & 0.0320 & 0.0624 & 0.0922 \\
64  & 0.0189 & 0.0349 & 0.0508 \\
\bottomrule
\end{tabular}
\vspace{0.05in}
\end{minipage}
}
\hfill
\subfigure[Layer 1 Relative Error]{
\begin{minipage}{0.45\textwidth}
\centering
\begin{tabular}{lccc}
\toprule
 & \textbf{1/2} & \textbf{1/4} & \textbf{1/8} \\
\midrule
256 & 0.1036 & 0.1764 & 0.2655 \\
128 & 0.1082 & 0.1833 & 0.2741 \\
64  & 0.1137 & 0.1921 & 0.2858 \\
\bottomrule
\end{tabular}
\vspace{0.05in}
\end{minipage}
}

\vspace{0.5cm}

\subfigure[Layer 15 Runtime (s)]{
\begin{minipage}{0.45\textwidth}
\centering
\begin{tabular}{lccc}
\toprule
 & \textbf{1/2} & \textbf{1/4} & \textbf{1/8} \\
\midrule
256 & 0.3920 & 0.4505 & 0.5096 \\
128 & 0.3654 & 0.3951 & 0.4256 \\
64  & 0.3592 & 0.3753 & 0.3910 \\
\bottomrule
\end{tabular}
\vspace{0.05in}
\end{minipage}
}
\hfill
\subfigure[Layer 15 Relative Error]{
\begin{minipage}{0.45\textwidth}
\centering
\begin{tabular}{lccc}
\toprule
 & \textbf{1/2} & \textbf{1/4} & \textbf{1/8} \\
\midrule
256 & 0.1107 & 0.1935 & 0.2798 \\
128 & 0.1121 & 0.1952 & 0.2813 \\
64  & 0.1141 & 0.1978 & 0.2845 \\
\bottomrule
\end{tabular}
\vspace{0.05in}
\end{minipage}
}

\caption{Runtime and relative error for  across different layers and block sizes. In each figure the rows are corresponding to various batch sizes and columns corresponding to various compression rates}
\label{fig:kv_runtime_error}
\end{figure}

\subsection{End-to-End Evaluation on LongBench}\label{sec:appendix_end_to_end}
\setlength{\tabcolsep}{3.5pt}

We now provide the complete experimental details on the end-to-end evaluation in \cref{sec:end-to-end}. We benchmark our algorithm on LongBench dataset~\cite{bai2023longbench}, a comprehensive collection of datasets designed to evaluate the long-context understanding capabilities of large language models. Specifically, we test a version of uniform length distribution (LongBench-E). The benchmark consists of various long-text application scenarios, including single-document question-answering, multi-document question-answering, summarization, few-shot learning, synthetic tasks and code completion. 
We use \hyperref[alg:main]{\textsc{BalanceKV}} to compress the key value cache generated in the prefill stage, and maintain all streamed embeddings $(q_j, k_j, v_j)$ during the token decoding/generation stage. This is because the number of generated tokens is much smaller than the input sequence length. We set $b=256$ and $T=2$, achieving a consistent compression rate of $0.25$ across all inputs.

We evaluate our method against several token-level key value cache compression schemes, including \\ StreamingLLM~\cite{xiao2023efficient}, SnapKV~\cite{li2024snapkv}, and PyramidKV~\cite{cai2024pyramidkv} as well as uniform sampling described in \cref{sec:single-layer}.
We use their implementations from MInference~\cite{jiang2024minference}, and configure their hyperparameters to match a uniform compression rate with $0.25$. We follow the same evaluation metrics from~\cite{bai2023longbench}. We test them on \llama{} as well as bigger 14B and 32B parameter models \qwenfourteen{} and \qwenthirtytwo{} \cite{qwen2,qwen2.5}, with results summarized in Table \ref{tab:longbench_all}. 

Our method consistently achieves the highest average performance among compression methods and across all models, demonstrating its effectiveness in preserving model quality for the cache compression. 
Notably, on the triviaqa dataset, it achieves near-exact scores compared to uncompressed baselines (e.g., 80.68 vs. 81.14 with Qwen2.5-32B), highlighting its ability to retain high-quality information.
We observe that uniform sampling performs competitively with our method and this result justifies that a subset obtained from discrepancy theory has practical impacts on various LLM tasks.

\subsection{Needle-In-A-Haystack}\label{sec:appendix_niah}

In this section we report the plots corresponding to the Needle in a Haystack experiment described in Section \ref{sec:niah}. They are presented in Figure \ref{fig:niah}.
\begin{figure}[t!]
    \centering
    \begin{subfigure}
        \centering
        \includegraphics[width=0.9\textwidth]{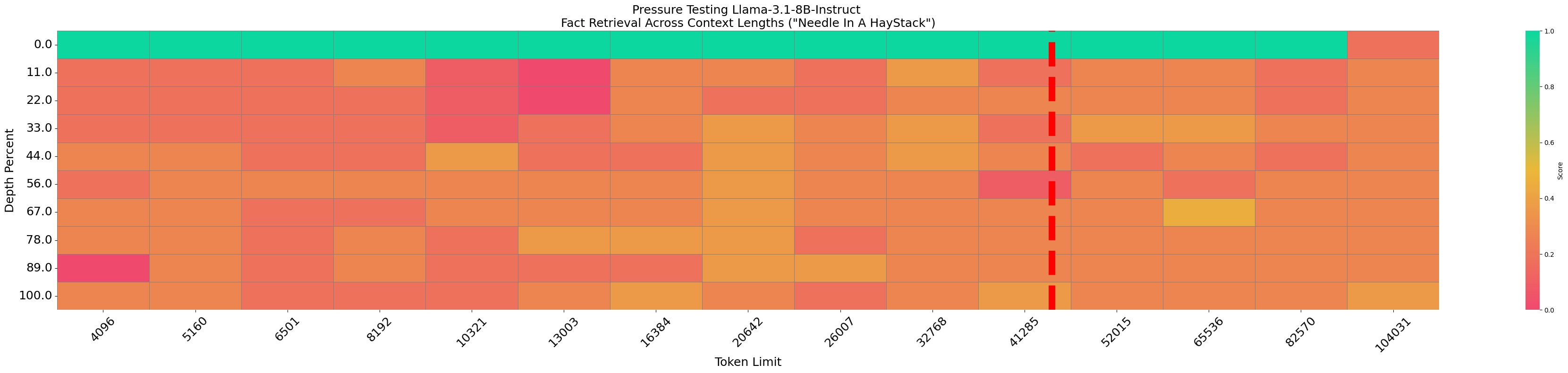}
    \end{subfigure}
    \begin{subfigure}
        \centering
        \includegraphics[width=0.9\textwidth]{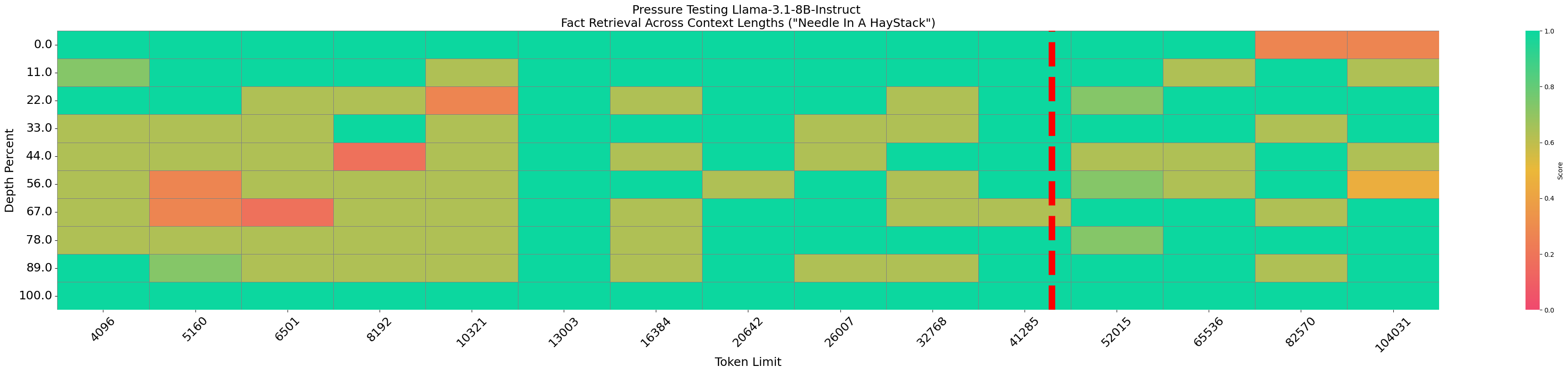} 
    \end{subfigure}
    \begin{subfigure}
        \centering
        \includegraphics[width=0.9\textwidth]{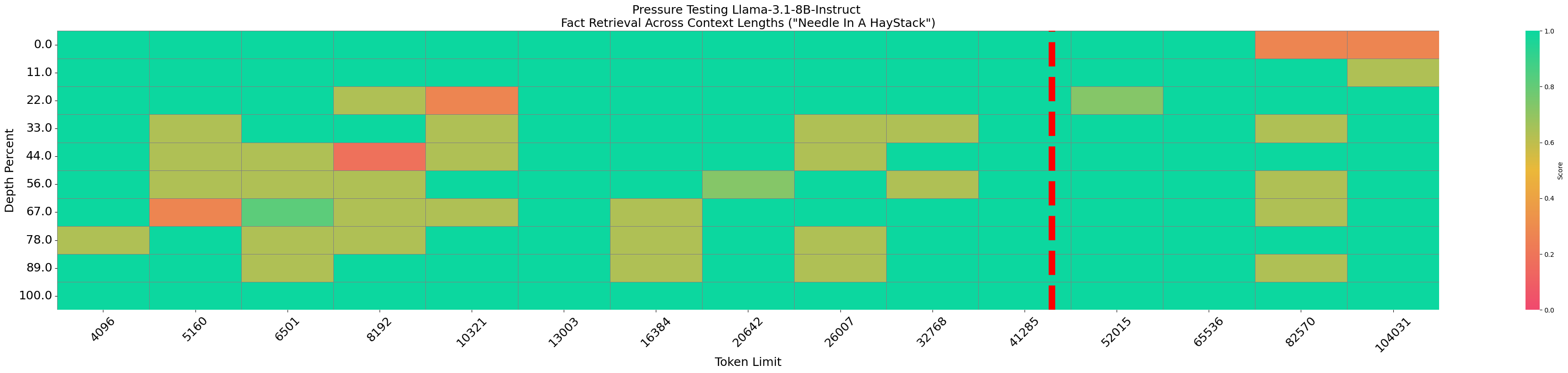} 
    \end{subfigure}
    \begin{subfigure}
        \centering
        \includegraphics[width=0.9\textwidth]{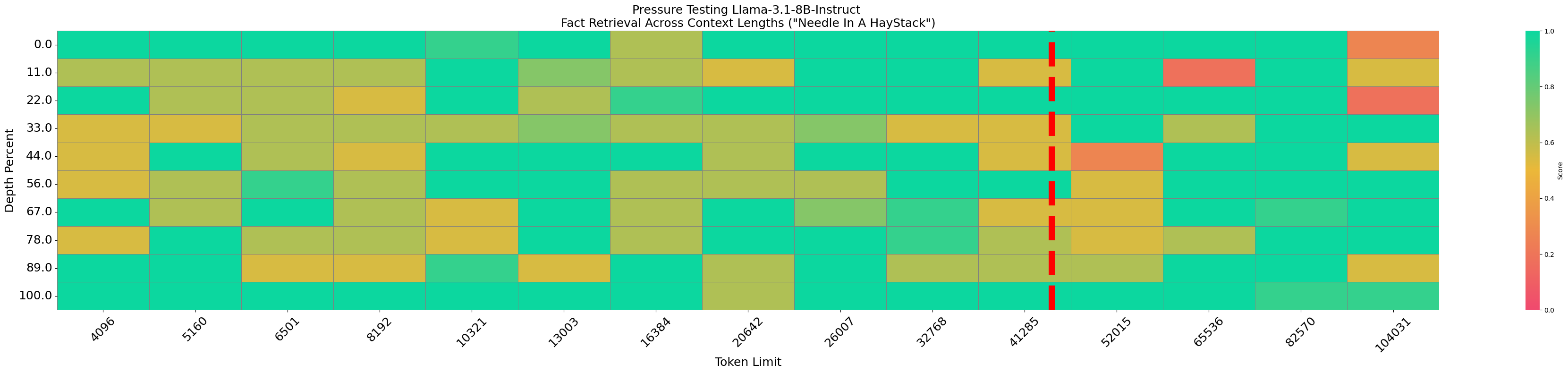} 
    \end{subfigure}
    \begin{subfigure}
        \centering
        \includegraphics[width=0.9\textwidth]{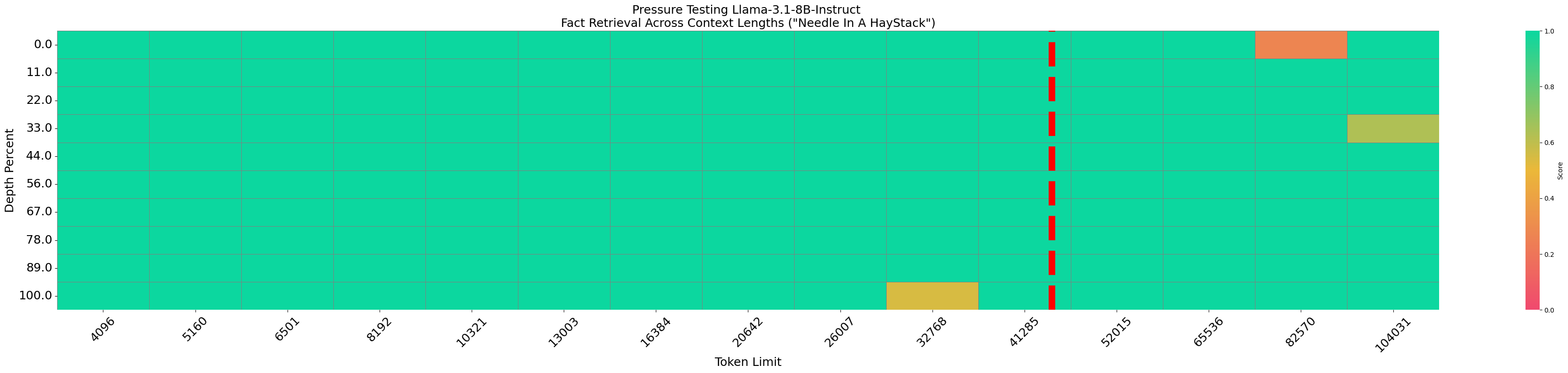} 
    \end{subfigure}
    \caption{Comparison of performance on Needle in a Haystack task using \llama{}. The methods corresponding to figures from top to bottom are  StreamingLLM, SnapKV, PyramidKV, Unif. Sampling and \hyperref[alg:main]{\textsc{BalanceKV}} respectively.}
    \label{fig:niah}
\end{figure}

\subsection{System Efficiency Metrics}\label{sec:appendix_system_efficiency}
In this section we present the prefill and decoding time numbers in Table \ref{tab:efficiency_metrics} as described in the system efficiency experimental details in Section \ref{sec:efficiency_metrics}.
\begin{table}[h]
\centering
\begin{tabular}{lcc}
\toprule
\textbf{Method} & \textbf{Prefill Time (sec)} & \textbf{Decoding Time (sec)} \\
\midrule
Exact         & 3.032 & 37.769 \\
SnapKV        & 3.755 & 40.426 \\
PyramidKV     & 3.748 & 37.241 \\
StreamingLLM  & 3.681 & 40.276 \\
\hyperref[alg:main]{\textsc{BalanceKV}}    & 3.662 & 38.054 \\
\bottomrule
\end{tabular}
\vspace{0.1in}
\caption{Minimum wall-clock runtime (in seconds) over 10 trials for prefill and decoding stages.}\label{tab:efficiency_metrics}
\end{table}

\subsection{Additional Experiments}\label{sec:additional}
\begin{enumerate}
    \item In \Cref{tab:norms-qkv}, we report the results of the experiment analyzing the $\ell_2$ norms of query (Q), key (K) and value (V) embeddings described in \Cref{sec:additional_main}. Due to the space limit, we provide representative results in the below table from (randomly chosen) prompts of various sequence lengths. We note that the reported $\ell_2$ norms of keys shifted by their average, as opposed to the norms of the keys, because our implementation of \textsc{BalanceKV} shifts the keys by their average before the compression. It is also easy to see that attention is invariant to the operation of shifting the keys by their average, so it is non-restrictive to assume that the average of the key values is zero.
\begin{table}[h!]
\centering
\caption{Statistics of norms of qkv embeddings for randomly chosen prompts from TriviaQA.}
\vspace{0.05in}
\label{tab:norms-qkv}
\begin{tabular}{cccccccc}
\toprule
\multirow{2}{*}{\bf{Prompt ID}} & \multirow{2}{*}{\bf{Seq Len}} &\multicolumn{2}{c}{\bf{Query}}&\multicolumn{2}{c}{\bf{Key Shifted}}&\multicolumn{2}{c}{\bf{Value}} \\
&  & \bf{Mean} & \bf{95\% CI} & \bf{Mean} & \bf{95\% CI} & \bf{Mean} & \bf{95\% CI} \\
\midrule
10 & 2281 & 14.4512 & 0.0029 & 15.3451 & 0.0065 & 3.3398 & 0.0043 \\
24 & 3131 & 14.6003 & 0.0025 & 15.6603 & 0.0058 & 3.3539 & 0.0036 \\
30 & 3388 & 14.7406 & 0.0024 & 15.5152 & 0.0054 & 3.3419 & 0.0035 \\
22 & 4230 & 14.8339 & 0.0021 & 15.6907 & 0.0049 & 3.3468 & 0.0032 \\
5  & 5734 & 14.9259 & 0.0019 & 15.7149 & 0.0041 & 3.3482 & 0.0027 \\
14 & 6616 & 14.9000 & 0.0017 & 15.6757 & 0.0038 & 3.3596 & 0.0025 \\
4  & 6962 & 14.9743 & 0.0017 & 15.7346 & 0.0039 & 3.3450 & 0.0024 \\
21 & 8041 & 14.9364 & 0.0016 & 15.7025 & 0.0035 & 3.3491 & 0.0023 \\
26 & 17337 & 15.1437 & 0.0011 & 15.9531 & 0.0024 & 3.3654 & 0.0016 \\
27 & 21274 & 15.2065 & 0.0010 & 15.8745 & 0.0022 & 3.3683 & 0.0014 \\
\bottomrule
\end{tabular}
\end{table}
\item In \Cref{tab:eval-metrics}, we report the results of the multimodal task experiment described in \Cref{sec:additional_main}.
\begin{table}[h!]
\centering
\caption{Comparison of \textsc{BalanceKV} to uniform sampling on MS-COCO, using \intern. The bracket for every method contains the compression rate.}
\vspace{0.05in}
\label{tab:eval-metrics}
\begin{tabular}{lccccccc}
\toprule
\textbf{Method} & \textbf{Bleu\_1} & \textbf{Bleu\_2} & \textbf{Bleu\_3} & \textbf{Bleu\_4} & \textbf{METEOR} & \textbf{RougeL} & \textbf{CIDEr} \\
\midrule
Exact              & 0.795 & 0.629 & 0.476 & 0.351 & 0.291 & 0.580 & 1.255 \\
BalanceKV (1/4)      & 0.794 & 0.628 & 0.475 & 0.351 & 0.290 & 0.579 & 1.251 \\
Unif (1/4)           & 0.794 & 0.629 & 0.476 & 0.350 & 0.290 & 0.578 & 1.247 \\
BalanceKV (1/16)   & 0.789 & 0.622 & 0.468 & 0.343 & 0.286 & 0.573 & 1.221 \\
Unif (1/16)        & 0.789 & 0.619 & 0.465 & 0.340 & 0.284 & 0.571 & 1.207 \\
\bottomrule
\end{tabular}
\end{table}
\item In \Cref{tab:compression-results}, we present the results of the end-to-end experiment on LongBench in the extremely low error regime, described in \Cref{sec:additional_main}. We note, that etremely low error regime corresponds to the low compression rate regime, and therefore our experiment is equivalent to exploring the performance of \textsc{BalanceKV} in the low compression rate regime. For compression rates of 0.8, 0.9 and 0.95, we randomly select a dataset from LongBench and apply both uniform sampling and \textsc{BalanceKV} to achieve the desired compression rate. More specifically, if we wish to compress a KV cache to $1 - \alpha$ of it's original size, we select its subset of size $2\alpha$, compress it by a factor of 2 with either uniform sampling or \textsc{BalanceKV} and keep the rest exactly.  
\begin{table}[h!]
\centering
\caption{Comparison of \textsc{BalanceKV} to uniform sampling in LongBench in the extremely low-error regime.}
\vspace{0.05in}
\label{tab:compression-results}
\begin{tabular}{l l c c c}
\toprule
\textbf{Compression Rate} & \textbf{Dataset} & \textbf{BalanceKV} & \textbf{Uniform} & \textbf{Baseline} \\
\midrule
0.8  & HotpotQA     & 50.2 & 48.4 & 51.9 \\
0.8  & TriviaQA     & 91.6 & 86.3 & 91.6 \\
0.9  & MultiFieldQA & 47.5 & 44.9 & 47.8 \\
0.9  & Qasper       & 42.3 & 39.6 & 43.1 \\
0.95 & LCC          & 49.3 & 45.7 & 49.5 \\
0.95 & P.Count      & 20.7 & 20.1 & 20.7 \\
\bottomrule
\end{tabular}
\end{table}
\item In \Cref{table:subgen}, we present the results of the end-to-end evaluation of ClusterGen \cite{zandieh2024subgen} as per the experimental setup in \cref{sec:appendix_end_to_end} and compare its performance to both \textsc{BalanceKV} and exact attention. 
\definecolor{modelgray}{gray}{0.9}
\definecolor{lightblue}{RGB}{230,245,255}
\begin{table}
\begin{center}
\begin{adjustbox}{max width=\linewidth}
\begin{tabular}{lcccccccccccccc}
\toprule
Method & qasper & multi & hotpotqa & 2wiki & gov & multinews & trec & triviaqa & samsum & p.count & p.ret & lcc & repo-p & average \\
\midrule
\rowcolor{lightblue}
\multicolumn{15}{c}{\textbf{Llama-3.1-8B-Instruct}} \\
Exact (Baseline) & 42.87 & 48.54 & 52.05 & 38.6 & 31.31 & 22.07 & 71.67 & 91.85 & 42.36 & 20.37 & 98.13 & 49.62 & 42.73 & 50.17 \\
ClusterGen & 33.93 & 42.31 & 50.85 & 37.24 & 21.16 & 19.31 & 67.67 & 90.82 & 39.49 & 20.20 & 96.57 & 47.23 & 39.26 & 46.62 \\
\hyperref[alg:main]{\textsc{BalanceKV}} & 35.75 & 37.04 & 46.37 & 36.24 & 27.09 & 20.84 & 69.0 & 90.88 & 37.88 & 20.39 & 96.65 & 48.45 & 41.4 & {\bf 46.77} \\
\bottomrule
\end{tabular}
\end{adjustbox}
\end{center}
\vspace{0.05in}
\caption{Comparison of ClusterGen, BalanceKV and exact attention on LongBench-E using \llama{}. The best results among compression methods for each model are highlighted in bold.}\label{table:subgen}
\end{table}
\end{enumerate}

\section{Lower Bound}\label{sec:appendix_lower_bound}
In this section, we prove the lower bound on the space complexity of an algorithm approximating the $\text{Attn}(\cdot, K, V)$ function. More formally, 
\lowerbound*

The proof will be a reduction to the well-known INDEX problem. 

\subsection{Reduction to the INDEX Problem}
\begin{definition}[The INDEX problem]
Alice gets a bit string $x \sim \text{Unif}\{0, 1\}^n$  and Bob gets $i \sim \text{Unif}[n]$. Then, the goal is to compute
$f(x, i) = x_i$ on Bob’s end with a single message $m$ from Alice. Denote by
$R_{\delta}^{pub, \to}$ the public coin one-way communication complexity of
computing a function $f(x, y)$ with error probability at most $\delta$:
Alice holds $x$, Bob holds $y$, they share a source of random bits and Alice sends
a single message to Bob, after which he must output the correct answer with
probability at least $1 - \delta$.
\end{definition}

\begin{theorem}[Proven in \cite{lectures}] \label{thm:INDEX}
    \[R_{2/3}^{pub, \to}(INDEX) \geq \Omega(n).\]

\end{theorem} 

\begin{proof} [Proof of Theorem \ref{thm:lower_bound}] Let $c$ be the small constant such that  $R_{2/3}^{pub, \to}(INDEX) \geq  c\cdot n$.

Assume the contrary to the statement of the Theorem \ref{thm:lower_bound}~-- that there exists a streaming algorithm of space complexity $const\cdot \min\{\frac{1}{\varepsilon^2}, d\exp(2r^2/\sqrt{d})\}$ for any sufficiently small constant $const$. We will show that given a string of length $\min\{\frac{1}{\varepsilon^2}, d\exp(2r^2/\sqrt{d})\}$ Alice can solve the INDEX problem as follows. She instantiates such an algorithm with $const < c/C$ for a sufficiently large constant $C$, gives it as input a carefully selected set of keys and values, and sends the state of its memory to Bob. Bob, on his end,  can determine whether any randomly drawn bit $i \sim \text{Unif}[n]$ equals 0 or 1 with probability 0.8 by issuing a corresponding (carefully crafted) query to the streaming algorithm and observing its output. Thus, if the streaming algorithm uses small space, we get a contradiction with Theorem \ref{thm:INDEX}, and therefore obtain a proof of Theorem $\ref{thm:lower_bound}.$

\paragraph{The reduction.} Suppose Alice's input to the INDEX problem is a bit string $x \in \{0, 1\}^n$ of length $n = \min\{\frac{1}{\varepsilon^2}, d\exp(2r^2/\sqrt{d})\}$. Using  public coins, Alice and Bob jointly generate $n/d$ key vectors $\Tilde{k}_1, \ldots, \Tilde{k}_{n/d} \sim \text{Unif}\left\{-\frac{r}{\sqrt{d}}, \frac{r}{\sqrt{d}}\right\}^d,$ function $\pi: [n] \to [n/d]\times[d]$ which randomly partitions the $n$ bits into groups of size $d$, and $n$ random signs $\sigma_1, \ldots, \sigma_n \sim \text{Unif}\{-1, 1\}$. 

Let $\pi(i)_1\in [n/d]$ be the first component of $\pi(i)$ and $\pi(i)_2\in [d]$~-- the second component of $\pi(i)$. Let $e_1, \ldots, e_d$ be the standard orthonormal basis in $\mathbb{R}^d$. We build the dataset of $n$ key-value pairs in the following way. We associate with the $i$-th bit the key vector $k_i \coloneqq  \Tilde{k}_{\pi(i)_1}$ and the value vector $v_i \coloneqq \sigma_i\cdot e_{\pi(i)_2}$. 

Define 
$$U = \{\{k_i, v_i\}: x_i = 1\}
$$
the set of key-value pairs corresponding to entries 1 in Alice's bit string $x$. Alice instantiates the streaming algorithm for approximating $\text{Attn}(\cdot, K, V)$ with space complexity $\frac{c}{C}\cdot\min\{\frac{1}{\varepsilon^2}, d\exp(2r^2/\sqrt{d})\}$ with a big enough constant $C$ which we specify later and sends the state of it's memory before reading $q$ to Bob. When Bob receives the message, he uses it to approximate $\text{Attn}(q_i, K, V)$ where $q_i = k_i$. If the value written in the only non-zero coordinate of $v_i$ is larger than $\frac{1}{40}\cdot \frac{\exp(r^2/\sqrt{d})}{\max\{d\exp(r^2/\sqrt{d}), |U| \}}$, Bob reports that the $i$-th bit of Alice's string is equal to 1, and otherwise ~- 0. 

\paragraph{Analysis of the reduction.} 
\paragraph{Proof sketch.} Before moving to formal proofs we briefly outline the main idea of the analysis. Observe that, by the choice of key vectors,  any $\exp(\langle k_j, q_i\rangle/\sqrt{d})$ for $j \neq i$ is in expectation insignificantly small compared to $\exp(\langle k_i, q_i\rangle/\sqrt{d})$ ~-- sometimes we will even refer to these terms as ``noise''. This statement is formalized in Lemma \ref{lem:big/small_coord}. Therefore, when Bob computes an approximation to $\text{Attn}(q_i, K, V)$, he will observe a \textit{large} value in the coordinate where $v_i$ is non-zero if $\{k_i, v_i\} \in U$ and a \textit{small} value otherwise. 

\begin{lemma}\label{lemma:var_calc}$\mathbb{E}_{x, y \sim \text{Unif}\left\{-\frac{r}{\sqrt{d}}, \frac{r}{\sqrt{d}}\right\}^d}[\exp(C\langle x, y\rangle/\sqrt{d})] = \Theta(1)$ for any constant $C$.

\end{lemma}

\begin{proof}
    \begin{align*}&\mathbb{E}_{x, y \sim \text{Unif}\left\{-\frac{r}{\sqrt{d}}, \frac{r}{\sqrt{d}}\right\}^d}[\exp(C\langle x, y\rangle/\sqrt{d})] \\
    &= \left(\frac{1}{2}\exp(Cr^2/d^{3/2}) + \frac{1}{2}\exp(-Cr^2/d^{3/2}))\right)^d =\text{cosh}\left(\frac{Cr^2}{d^{3/2}}\right)^d,\\
    & 1 \leq \exp(C^2r^4/4d^2)\leq \text{cosh}\left(\frac{Cr^2}{d^{3/2}}\right)^d \leq \exp(C^2r^4/d^2) \leq \exp(C^2)
    \end{align*}
where we used the assumption that $r^2/d \leq 1$. 
\end{proof}

\begin{lemma}\label{lem:big/small_coord} Fix $i\in [n]$, select $q_i = k_i$. Let $|U_{i}|$ be the number of key-value pairs in $U\setminus \{k_i, v_i\}$ whose value vector has non-zero entry in the same coordinate as $v_i$.

 If $\{k_i, v_i\} \in U$ then with probability $> 1 -  \frac{1}{1000}\cdot\frac{|U_{i}|}{|U|}$
\[\left|\sum_{\{k, v\}\in U}\exp\left(\frac{\langle k, q_i \rangle}{\sqrt{d}}\right)\langle v, v_i\rangle\right| \geq \exp(r^2/\sqrt{d})  - O\left( \sqrt{|U|}\right).\]

 Otherwise, with probability  $> 1 -  \frac{1}{1000}\cdot\frac{|U_{i}|}{|U|}$

\[\left|\sum_{\{k, v\}\in U}\exp\left(\frac{\langle k, q_i \rangle}{\sqrt{d}}\right)\langle v, v_i\rangle\right| \leq  O\left(\sqrt{|U|}\right).\]
\end{lemma}

\begin{proof} We prove both statements using Chebyshev's inequality.

In the first case, i.e. when $\{k_i, v_i\}\in U$, the sum contains the term $\exp(\langle k_i, q_i\rangle/\sqrt{d}) = \exp(r^2/\sqrt{d})$, and otherwise it does not. It therefore remains to upper bound  the absolute value of the sum 
\begin{align*}
X=\sum_{\substack{\{k, v\}\in U,\\ \{k, v\}\neq \{k_i, v_i\}}}\sigma_k\exp\left(\frac{\langle k, q_i \rangle}{\sqrt{d}}\right)\langle v, v_i\rangle=\sum_{\substack{\{k, v\}\in U_i,\\ \{k, v\}\neq \{k_i, v_i\}}}\sigma_k\exp\left(\frac{\langle k, q_i \rangle}{\sqrt{d}}\right)\langle v, v_i\rangle,
\end{align*}
$\sigma_k \sim \text{Unif}\{-1,1\}$, which effectively introduces ``noise'' in Bob's estimate of whether $x_i=1$. We upper bound this sum now.

 $\mathbb{E}[X] = 0$, and $Var(X) =|U_i|\cdot Var_{x, y \sim \text{Unif}\left\{-\frac{r}{\sqrt{d}}, \frac{r}{\sqrt{d}}\right\}^d}(\exp(\langle x, y\rangle/\sqrt{d})) \leq O\left( |U_i|\right)$ because
\begin{equation*}\label{eq:var}
\begin{split}
    Var_{x, y \sim \text{Unif}\left\{-\frac{r}{\sqrt{d}}, \frac{r}{\sqrt{d}}\right\}^d}(\exp(\langle x, y\rangle/\sqrt{d}))  \leq \mathbb{E}_{x, y \sim \text{Unif}\left\{-\frac{r}{\sqrt{d}}, \frac{r}{\sqrt{d}}\right\}^d}[\exp(2\langle x, y\rangle/\sqrt{d})] \leq \exp(4),
\end{split} 
\end{equation*}
by Lemma \ref{lemma:var_calc}.
 We therefore get by Chebyshev's inequality

\[\Pr\left[|X| \geq 1000\sqrt{|U|}\right] \leq \frac{1}{1000}\cdot\frac{|U_{i}|}{|U|}.\]

Therefore, with probability $1 - \frac{1}{1000}\cdot\frac{|U_i|}{|U|}$ 
\[\left|\sum_{\{k, v\}\in U}\exp\left(\frac{\langle k, q_i \rangle}{\sqrt{d}}\right)\langle v, v_i\rangle\right| \geq \exp(r^2/\sqrt{d})  - 1000 \sqrt{|U|}.\]

In the second case, the entire sum equals $X = \sum_{k\in U_i} \sigma_k \exp(\langle k, q_i\rangle/\sqrt{d})$. As shown above, $\Pr\left[|X| \geq 1000\sqrt{|U|}\right] \leq \frac{1}{1000}\cdot\frac{|U_{i}|}{|U|}$. Hence,  with probability $1 - \frac{1}{1000}\cdot\frac{|U_i|}{|U|}$ 
\[\left|\sum_{\{k, v\}\in U}\exp\left(\frac{\langle k, q_i \rangle}{\sqrt{d}}\right)\langle v, v_i\rangle\right| \leq  1000\sqrt{|U|}.\]

\end{proof}

\begin{corollary}
 Suppose bits $i_1, \ldots, i_d$ form a group ~-- that is, $\pi(i_1)_1 = \pi(i_2)_1 = \ldots = \pi(i_d)_1$. Then all $v_{i_1}, \ldots, v_{i_d}$ have different non-zero coordinates, and therefore $\sum_{j = 1}^d|U_{i_j}| \leq |U|$.

 Therefore, by the union bound argument, the conclusion of Lemma \ref{lem:big/small_coord} holds for all $d$ bits which form one group simultaneously with probability 0.999.
\end{corollary}

\begin{lemma}\label{lem:better_conc}
Fix a bit $i$. With probability 0.98 the following holds:

\begin{enumerate}
    \item The error of the approximating algorithm in the only non-zero coordinate of $v_i$ is bounded by \[O\left(\frac{\varepsilon}{\sqrt{d}}\cdot\|\text{softmax}(K\cdot q)\|_2\cdot\|V\|_F\right).\]
    \item If the $i$-th bit is 1 then

\[\left|\sum_{\{k, v\}\in U}\exp\left(\frac{\langle k, q_i \rangle}{\sqrt{d}}\right)\langle v, v_i\rangle\right| \geq \exp(r^2/\sqrt{d})  - O\left( \frac{\sqrt{|U|}}{\sqrt{d}}\right),\]
and 

\[\left|\sum_{\{k, v\}\in U}\exp\left(\frac{\langle k, q_i \rangle}{\sqrt{d}}\right)\langle v, v_i\rangle\right| \leq  O\left(\frac{\sqrt{|U|}}{\sqrt{d}}\right).\]

otherwise.
\end{enumerate} 

\end{lemma}

\begin{proof} We may think that the process of generating the dataset and the approximating streaming algorithm has the following order: first Alice and Bob jointly generate the partition $\pi$, the key vectors $\Tilde{k}_1, \ldots, \Tilde{k}_{n/d}$ and the value vectors $v_1, \ldots, v_n$ all using public randomness. To generate Bob's input position $i \in [n]$ we generate pair $a \sim \text{Unif}[n/d]$, $b \sim \text{Unif}[d]$ and declare $i = \pi^{-1}(a, b)$. We may assume that $a$ is chosen before the datasets $K$ and $V$ are generated, and $b$~-- after.

 Before $b$ is drawn, the key vector $k_i$ of $i = \pi^{-1}(a, b)$ is already defined, as well as the datasets $K$, $V$ and $U$. Alice can therefore already apply the streaming algorithm to $U$, and Bob can already apply it to $q_i = k_i$. Therefore, the error vector which the streaming algorithm yields when applied to $q_i = k_i$ is also defined before $b$ is known. 

Clearly, there are no more than $0.0001\cdot d$ coordinates in which the error of approximation exceeds $ 10000\cdot\frac{\varepsilon}{\sqrt{d}}\cdot\|\text{softmax}(K\cdot q)\|_2\cdot\|V\|_F.$ Since every value vector has only one non-zero entry, there are no more than $0.0001\cdot d$ coordinates where at least $10000\cdot\frac{U}{d}$ of value vectors from $U$ have non-zero value. We call all coordinates which are in neither of these two groups \textit{safe}. From the above, at least $99\%$ of the coordinates are safe. Recall that $b \sim \text{Unif}[d]$, and choosing $b$ is equivalent to choosing the coordinate in which $v_i$ is non-zero. Therefore, with probability 0.99 over the choice of $b$ the only non-zero coordinate of $v_i$ is safe.

At the same time, similarly to Lemma \ref{lem:big/small_coord}, by Chebyshev inequality, if $U_i \subset U$ is the set of all key-value pairs in $U$ whose value vector has the same non-zero coordinate as $v_i$ then with probability 0.999 if the $i$-th bit is 1 then

\[\left|\sum_{\{k, v\}\in U}\exp\left(\frac{\langle k, q_i \rangle}{\sqrt{d}}\right)\langle v, v_i\rangle\right| \geq \exp(r^2/\sqrt{d})  - O\left( \sqrt{|U_i|}\right),\]
and 

\[\left|\sum_{\{k, v\}\in U}\exp\left(\frac{\langle k, q_i \rangle}{\sqrt{d}}\right)\langle v, v_i\rangle\right| \leq  O\left(\sqrt{|U_i|}\right).\]

otherwise.

By union bounding over these two events, we get that the statement of the lemma is correct with high constant probability.

\end{proof}

\textit{Conclusion of the proof.} Let $U^i \subset U$ be the set of all pairs from $U$ with  the same key as $\{k_i, v_i\}$. Since Alice's string is drawn from $\text{Unif}\{-1, 1\}^n$, with probability 0.9 $|U^i| \geq 0.4\cdot d$. This is because every bit in the same group as $k_i$ belongs to $U$ with probability 1/2.

Observe that by Chebyshev inequality, with high probability 0.999, the denominator of $\text{softmax}(K\cdot q_i)$ lies in range
\[ \left[|U^i|\cdot \exp(r^2/\sqrt{d})  + \frac{1}{5}\cdot|U|, |U^i|\cdot \exp(r^2/\sqrt{d})  + 20\cdot|U|\right].\]
This is because every summand in the denominator, except for $\exp(\langle k_i, q_i\rangle/\sqrt{d})$, is distributed as $\exp(\langle x, y\rangle/\sqrt{d})$, $x, y \sim \text{Unif}\left\{-\frac{r}{\sqrt{d}}, \frac{r}{\sqrt{d}}\right\}^d$, and the expectation and the variance of this distribution, as shown in Lemma \ref{lemma:var_calc}, is $\Theta(1)$. This range is contained in $\left[\frac{1}{5}\cdot (\max\{de^{r^2/\sqrt{d}}, |U|\}), 20\cdot(\max\{de^{r^2/\sqrt{d}}, |U|\}) \right]$. We will denote the denominator as $D$.

Similarly, by Chebyshev inequality, with probability 0.999 the numerator of $\text{softmax}(K\cdot q_i)$ lies in 

\[ \left[\sqrt{|U^i|\cdot \exp(2r^2/\sqrt{d})  + \frac{1}{5}\cdot|U|},\sqrt{ |U^i|\cdot \exp(2r^2/\sqrt{d})  + 200\cdot|U|}\right]\] which, since $|U| \leq d\exp(2r^2/\sqrt{d})$, is bounded by $\sqrt{200}\cdot\sqrt{d}e^{r^2/\sqrt{d}}$.

Suppose that the $i$-th bit is 1. Then, 
\begin{itemize}
    \item When $\frac{1}{\varepsilon} \geq \sqrt{d}e^{r^2/\sqrt{d}}$, by selecting $|U| = \frac{c}{C}\cdot de^{2r^2/\sqrt{d}}$ for some enough constant $C$ the value written in the only non-zero coordinate of $v_i$ is at least $\frac{e^{r^2/\sqrt{d}}}{D} - \frac{1}{100}\frac{e^{r^2/\sqrt{d}}}{D}$ and at most $\frac{1}{100}\frac{e^{r^2/\sqrt{d}}}{D}$ otherwise, as follows from Lemma \ref{lem:big/small_coord};
     \item When $\frac{1}{\varepsilon} < \sqrt{d}e^{r^2/\sqrt{d}}$, by selecting $|U| = \frac{c}{C}\cdot\frac{1}{\varepsilon^2}$ for some big enough constant $C$ the value written in the only non-zero coordinate of $v_i$ is at least $\frac{e^{r^2/\sqrt{d}}}{D} - \frac{1}{100}\cdot\frac{1}{\varepsilon\cdot D\cdot\sqrt{d}}$, and at most $\frac{1}{100}\cdot\frac{1}{\varepsilon\cdot D\cdot\sqrt{d}}$ otherwise, as follows from Lemma \ref{lem:big/small_coord}.
 
\end{itemize}

The error which the approximator can have in the non-zero coordinate of $v_i$ is bounded by
\[10000\frac{\varepsilon}{\sqrt{d}}\cdot\|\text{softmax}(K\cdot q)\|_2\cdot \|V\|_F \leq 10000\frac{\varepsilon}{\sqrt{d}}\cdot\frac{\sqrt{20}\cdot\sqrt{d}e^{r^2/\sqrt{d}}}{D}\cdot\sqrt{|U|},\]  as shown in Lemma \ref{lem:better_conc}. Below, we show that this error is smaller than the gap between $\frac{1}{40}\cdot\frac{e^{r^2/\sqrt{d}}}{\max\{de^{r^2/\sqrt{d}}, |U|\}}$ and the value written in the coordinate, which means that, even though the approximator introduces some error, Bob is still capable to tell whether the $i$-th bit is 1 or 0. 

\begin{itemize}
    \item When $\frac{1}{\varepsilon} \geq \sqrt{d}e^{r^2/\sqrt{d}}$, the gap between the value written in the coordinate and $\frac{1}{40}\frac{e^{r^2/\sqrt{d}}}{\max\{de^{r^2/\sqrt{d}}, |U|\}}$ is at least $\frac{1}{1000}\cdot \frac{e^{r^2/\sqrt{d}}}{\max\{de^{r^2/\sqrt{d}}, |U|\}}$, and the error
    \[ \frac{10000\varepsilon}{\sqrt{d}}\cdot\frac{\sqrt{20}\cdot\sqrt{d}e^{r^2/\sqrt{d}}}{D}\cdot\sqrt{|U|} \leq \frac{\varepsilon}{2000}\cdot \frac{\sqrt{d}\exp(2r^2/\sqrt{d})}{\max\{de^{r^2/\sqrt{d}}, |U|\}} \leq \frac{1}{2000}\cdot\frac{e^{r^2/\sqrt{d}}}{\max\{de^{r^2/\sqrt{d}}, |U|\}}\] by an appropriate choice of $C$.
    \item When $\frac{1}{\varepsilon} < \sqrt{d}e^{r^2/\sqrt{d}}$, the gap between the value written in the coordinate and $\frac{1}{40}\frac{e^{r^2/\sqrt{d}}}{\max\{de^{r^2/\sqrt{d}}, |U|\}}$ is at least $\frac{1}{1000}\cdot \frac{e^{r^2/\sqrt{d}}}{\max\{de^{r^2/\sqrt{d}}, |U|\}}$ and the error
       \[ \frac{10000\varepsilon}{\sqrt{d}}\cdot\frac{\sqrt{20}\cdot\sqrt{d}e^{r^2/\sqrt{d}}}{D}\cdot\sqrt{|U|} \leq \frac{\varepsilon}{2000}\cdot \frac{\exp(r^2/\sqrt{d})}{\max\{de^{r^2/\sqrt{d}}, |U|\}\cdot \varepsilon} \leq \frac{1}{2000}\cdot\frac{\exp(r^2/\sqrt{d})}{\max\{de^{r^2/\sqrt{d}}, |U|\}}\] by an appropriate choice of $C$.
\end{itemize}
\end{proof}
\end{document}